\newtheorem{theorem}{Theorem}
\theoremstyle{remark}
\title{Topology-Guided Knowledge Distillation for Efficient Point Cloud Processing}
\author{
  Luu Tung Hai \\
  The University of Alabama at Birmingham, USA \\
  \texttt{luutunghai@gmail.com} \\
  \And
  Thinh D. Le \\
  Soongsil University, South Korea \\
  \texttt{thomlestudy295@gmail.com} \\
  \AND 
  Zhicheng Ding \\
  Bowling Green State University, USA \\
  \texttt{dingz@bgsu.edu} \\
  \And 
  Qing Tian \\
  The University of Alabama at Birmingham, USA \\
  \texttt{qtian@uab.edu} \\
  \AND
  Truong-Son Hy \thanks{Corresponding Author} \\
  The University of Alabama at Birmingham, USA \\
  \texttt{thy@uab.edu}
}
\begin{document}

\maketitle

\begin{figure}[htbp]
    \centering

    \begin{subfigure}[t]{0.45\textwidth}
        \centering
        \includegraphics[width=\linewidth]{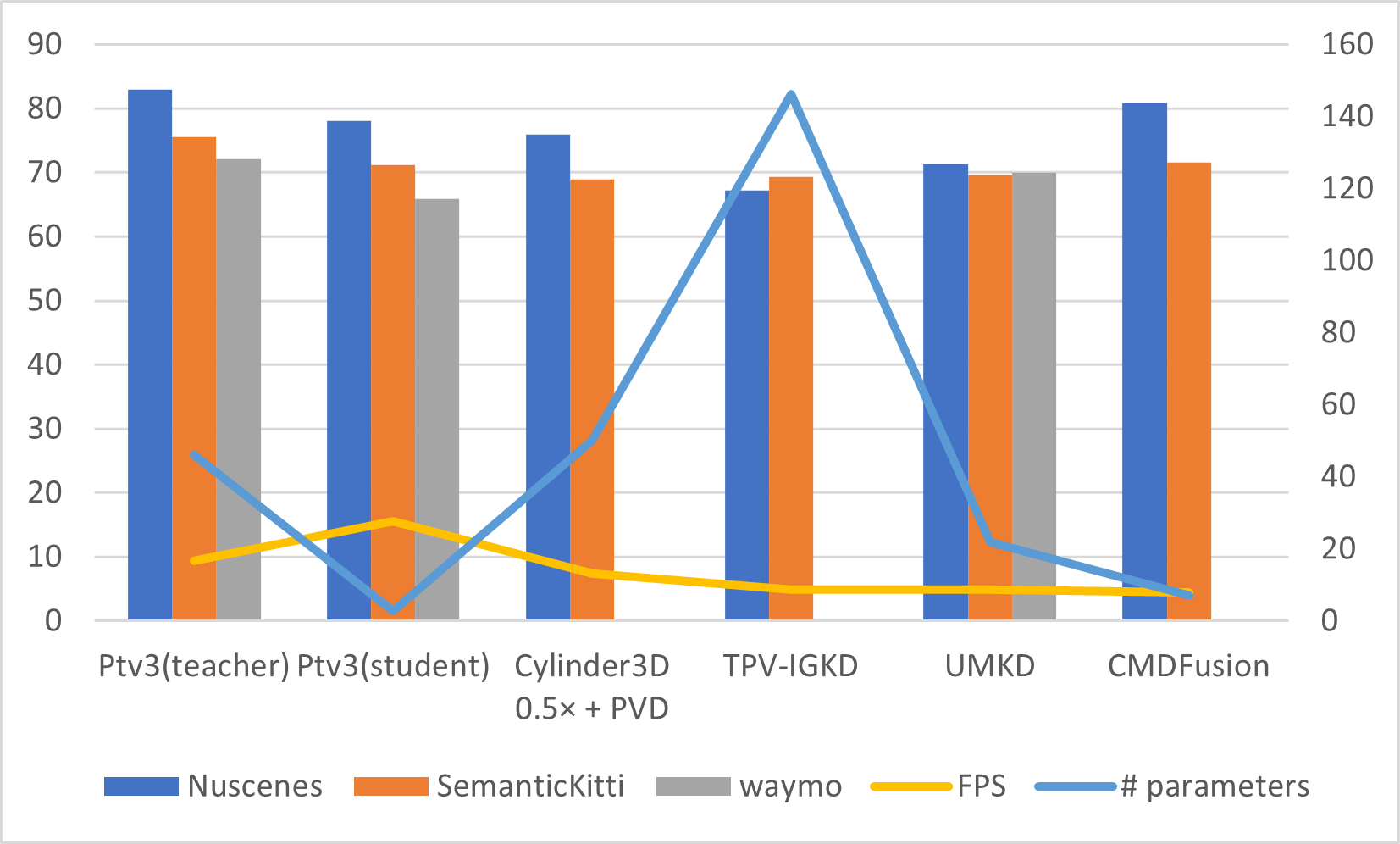}
        \caption{Performance comparison}
        \label{fig:compare-KD}
    \end{subfigure}
    \hfill
    \begin{subfigure}[t]{0.45\textwidth}
        \centering
        \includegraphics[width=\linewidth]{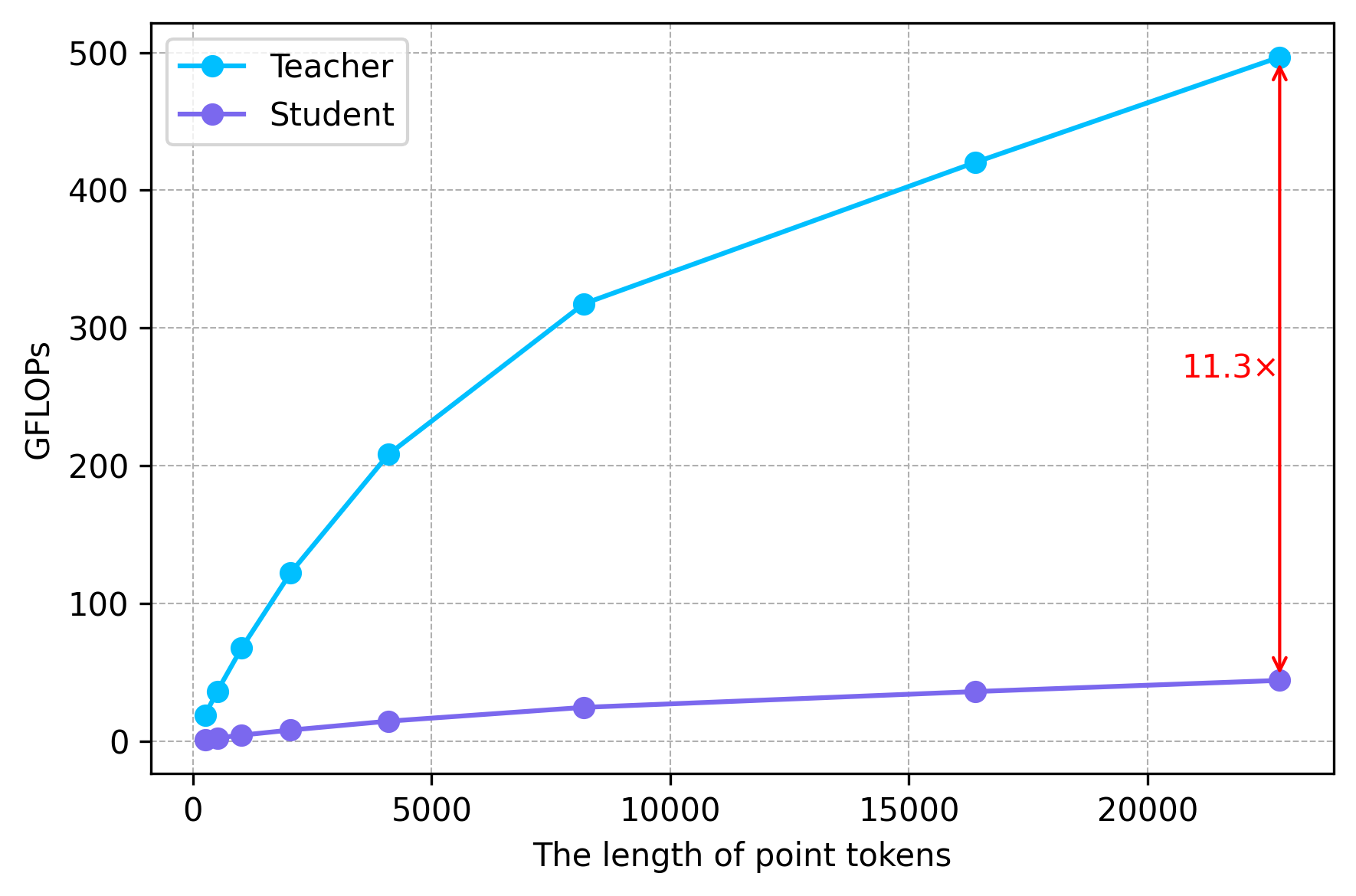}
        \caption{GFLOPs comparison}
    \end{subfigure}

    \vskip\baselineskip

    \begin{subfigure}[t]{0.45\textwidth}
        \centering
        \includegraphics[width=\linewidth]{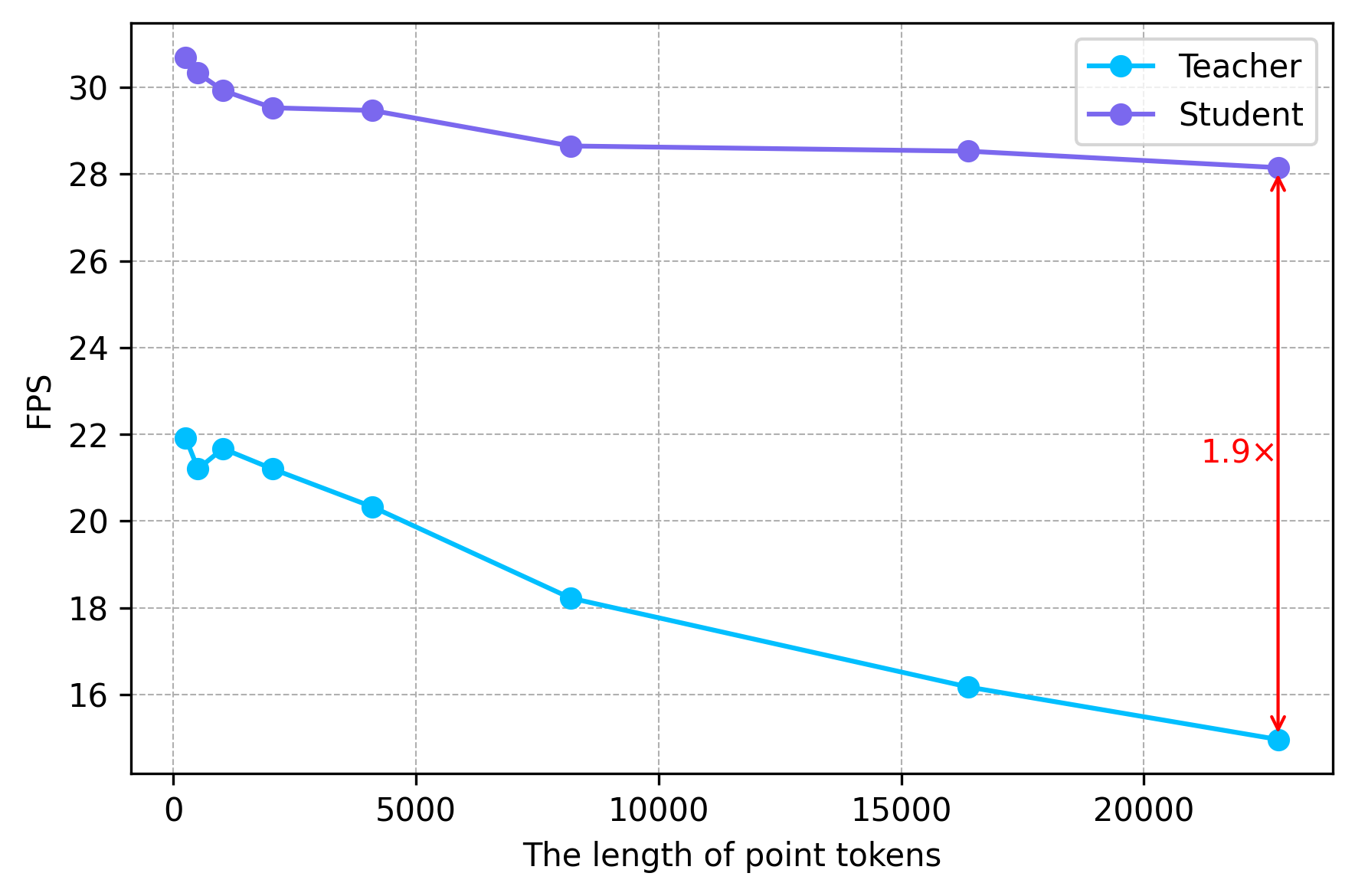}
        \caption{FPS comparison}
    \end{subfigure}
    \hfill
    \begin{subfigure}[t]{0.45\textwidth}
        \centering
        \includegraphics[width=\linewidth]{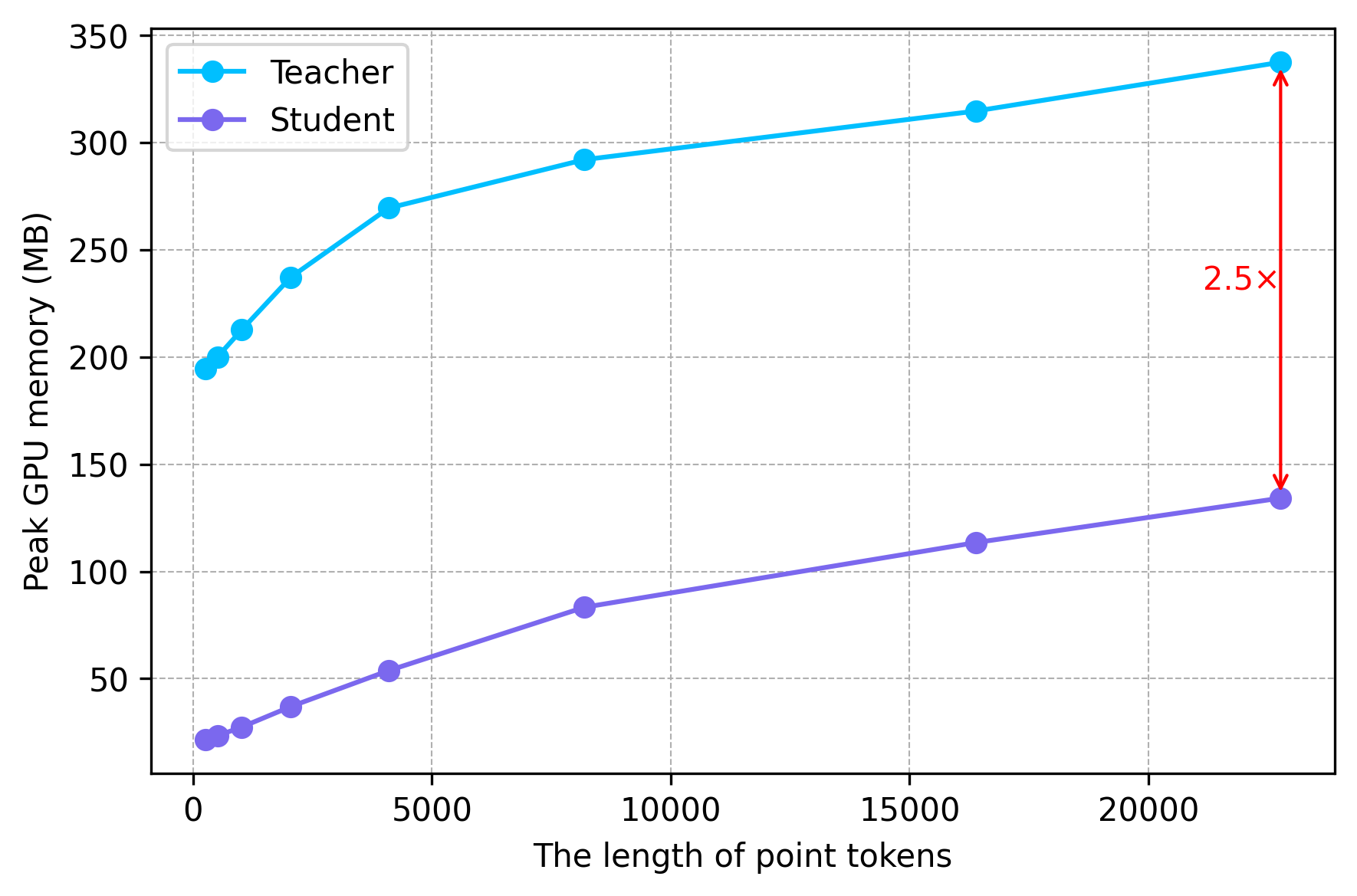}
        \caption{Peak Memory Comparison}
    \end{subfigure}

    \caption{Comprehensive comparisons between our proposed method and state-of-the-art knowledge distillation baselines across multiple evaluation metrics~\cite{point-to-voxel, tpv-igkd, umkd, cmdfusion}. 
(a) The radar chart demonstrates that our method achieves consistently better mIoU on three key datasets (NuScenes, SemanticKITTI, and Waymo), along with favorable FPS and memory efficiency. 
(b)-(d) As the input point token length increases, our approach maintains lower GPU memory usage and FLOPs, while sustaining significantly faster inference speed.
(c) Peak GPU memory usage during inference, measured using max memory allocated function in Torch. This metric reflects the highest amount of memory used by the PyTorch tensors by the caching allocator during the inference phase. Notably, this value may differ significantly from the memory reported by the PyTorch Profiler due to its inclusion of temporary allocations used by CUDA kernels.}
    \label{fig:quad_comparison}
\end{figure}

\begin{abstract}

    
    
    
    
    Point cloud processing has gained significant attention due to its critical role in applications such as autonomous driving and 3D object recognition. However, deploying high-performance models like Point Transformer V3 in resource-constrained environments remains challenging due to their high computational and memory demands. This work introduces a novel distillation framework that leverages topology-aware representations and gradient-guided knowledge distillation to effectively transfer knowledge from a high-capacity teacher to a lightweight student model. Our approach captures the underlying geometric structures of point clouds while selectively guiding the student model's learning process through gradient-based feature alignment. Experimental results in the Nuscenes, SemanticKITTI, and Waymo datasets demonstrate that the proposed method achieves competitive performance, with an approximately 16$\times$ reduction in model size and a nearly 1.9$\times$ decrease in inference time compared to its teacher model. Notably, on NuScenes, our method achieves state-of-the-art performance among knowledge distillation techniques trained solely on LiDAR data, surpassing prior knowledge distillation baselines in segmentation performance. Our implementation is available publicly at: \url{https://github.com/HySonLab/PointDistill}.
\end{abstract}

\section{Introduction}



Point cloud data are a critical representation of 3D geometry and have become essential in a wide range of applications, from autonomous driving and robotic navigation to urban mapping~\cite{voxelNet, geiger2012we, gomez2016pl, oh2002development}. Recent advances in deep learning have enabled significant progress in point cloud processing, with models such as Point Transformer V3~\cite{wu2024pointtransformerv3simpler} setting new benchmarks in accuracy and robustness. Despite the success of models like Point Transformer V3, their high computational demands and memory requirements~\cite{pointCloudCompress, 3D-Point-Cloud-Compression-A-Survey} pose challenges for deployment in resource-constrained environments, such as edge devices or real-time systems. To address this issue, various model compression strategies have been introduced, including methods such as network pruning~\cite{han2015deep, liu2019metapruning, louizos2018learning}, quantization~\cite{choi2018pact, dong2022finding, nagel2019data}, lightweight model architectures~\cite{howard2019searching, ma2018shufflenet}, and knowledge distillation~\cite{Zhang_2023_CVPR, Hou_2022_CVPR, Zhang2024HVDistill}.

Knowledge distillation is a machine learning technique that aims to transfer knowledge from a large and high capacity model to a smaller and more efficient model~\cite{hinton2014distilling, romero2014fitnets, sanh2019distilbert}. This approach allows the student model to approximate the performance of the teacher while being computationally less demanding, making it suitable for deployment in resource-constrained environments such as edge devices or mobile platforms. Over the years, knowledge distillation has been effectively applied in various domains, including image recognition~\cite{romero2014fitnets, liu2019structured} and natural language processing~\cite{hahn2019selfknowledgedistillationnaturallanguage, he2021distillersystematicstudymodel, rashid2020zeroshotknowledgedistillationnatural},  demonstrating its versatility and impact. Recently, several approaches have been introduced to incorporate knowledge distillation into 3D detection tasks using point cloud data~\cite{guo2021liga, Zhang2024HVDistill}. Nevertheless, these methods primarily emphasize the selection of student-teacher models in a multimodal context, such as utilizing an image-based teacher to guide a point-cloud-based student detector or vice versa, while largely overlooking the distinctive characteristics of point clouds. 

To address the challenges of current problems on point cloud distillation and the deployment of high-performance point cloud models in resource-constrained environments, we propose a novel distillation framework that combines topology-aware knowledge representation with gradient-guided distillation techniques. The framework leverages the inherent geometric and structural properties of point clouds to preserve critical topological information during the distillation process. By integrating gradient-based guidance, the proposed approach selectively emphasizes salient geometric features that contribute most significantly to the model's performance, enabling efficient knowledge transfer from a high-capacity teacher model to a lightweight student model. This strategy ensures that the student model retains competitive accuracy while significantly reducing computational and memory requirements, making it suitable for real-time and edge-based applications. 

Extensive experiments on the proposed method have been conducted to demonstrate the effectiveness of our approach over previous knowledge distillation methods. Our main contributions can be summarized as follows.
\begin{itemize}
    \item We propose a novel distillation framework that integrates topology-aware knowledge representation and gradient-guided distillation techniques, addressing the challenges of deploying high-performance point cloud models in resource-constrained environments.
    \item The framework leverages the unique geometric and structural properties of point clouds, embedding topological information into the distillation process to ensure the preservation of critical features necessary for accurate predictions.
    \item By incorporating gradient-guided distillation, our method selectively emphasizes salient features, enabling efficient and effective knowledge transfer from the teacher model to the student model.
    \item Extensive experimental results on popular benchmark datasets, such as Nuscenes reveal that our approach achieves up to a 16$\times$ reduction in the number of parameters and a 77.75\% reduction in CUDA memory consumption in linear operations and a 2.5$\times$ lower in peak CUDA memory usage during inference while maintaining accuracy within 5\% of state-of-the-art of non-distilled methods.
\end{itemize}
\section{Related Works}

\subsection{3D Point Cloud Processing}

The representation of 3D data using point clouds has become increasingly prominent in domains such as autonomous driving, robotics, and 3D reconstruction. Traditional deep learning approaches for understanding 3D point clouds can be categorized into three main types: projection-based, voxel-based, and point-based methods~\cite{halperin2025pointcloudbasedscene}. Projection-based techniques map 3D points onto 2D image planes and employ 2D CNN backbones for feature extraction\cite{chen2017multi, lang2019pointpillars, li2016vehicle}, often losing geometric details in the process. Voxel-based methods convert point clouds into structured voxel grids, allowing 3D convolutions with sparse convolution enhancing efficiency\cite{choy20194d, song2017semantic, wang2017cnn}, though they encounter scalability issues due to limited grid resolution, sparse and irregular data distribution, and kernel size constraints. In contrast, point-based methods directly process raw point clouds\cite{ma2022rethinking, qi2017pointnet, thomas2019kpconv, zhao2019pointweb}, with early approaches struggling to capture local structures until recent transformer-based architectures improved performance by modeling long-range dependencies and adapting to irregular distributions\cite{guo2021pct, robert2023efficient, wu2022point, yang2023swin3d}. Furthermore, hybrid methods that integrate point-voxel or graph-based representations have emerged to balance accuracy and efficiency. Across these approaches, challenges such as noise, occlusion, and varying point density in real-world data continue to impact performance.

\subsection{Point Transformer Architecture}

Transformer architectures improve point-based methods by leveraging self-attention to capture local and global dependencies effectively, outperforming CNN-based and voxel-based approaches. Early models like PCT~\cite{guo2021pct} and Point Transformer~\cite{wu2022point} demonstrated strong performance in classification and segmentation tasks.

Point Transformer V1 (PTv1)~\cite{point_trans_v1} extended the transformers to unordered 3D point sets by vector self-attention and local attention based on kNN, improving spatial modeling, but suffering from high memory and computational costs. Point Transformer V2 (PTv2)~\cite{point_trans_v2} introduced group vector attention and grid-based grouping to enhance scalability and reduce parameters, although kNN remained a bottleneck limiting long-range dependency capture.

Point Transformer V3 (PTv3)~\cite{wu2024pointtransformerv3simpler} shifted toward simplicity by serializing point clouds using space-filling curves and employing serialized patch attention, greatly expanding receptive fields, and eliminating kNN dependence. PTv3 achieved a 3.3$\times$ speedup and a 10.2$\times$ memory reduction over PTv2, establishing state-of-the-art results in diverse 3D tasks. However, PTv3's preprocessing overhead, increased latency on dense clouds, and dependence on high-end hardware limit its applicability in real-time, resource-constrained scenarios.

\subsection{Knowledge Distillation}

\textbf{Knowledge distillation (KD)} is a model-independent technique that improves student model training by transferring knowledge from a pre-trained teacher model, offering a way to enhance the efficiency of models such as Point Transformer V3 (PTv3). Early KD methods~\cite{hinton2015distilling} matched softmax outputs for classification, while later studies~\cite{liu2019structured, romero2014fitnets, tung2019similarity} extended KD to intermediate layers, capturing richer geometric and contextual information crucial for point-cloud data. KD is particularly promising for addressing the challenges of PTv3 in real-time deployment by enabling lighter, faster models.

\textbf{Topological Distillation} leverages topological data analysis (TDA) to transfer global structural features. Methods like TGD~\cite{jeon2024leveragingtopologicalguidanceimproved} and TopKD~\cite{kim2024do} distill topological knowledge through persistence images (PI) and diagrams (PD), improving the alignment of student-teacher. Despite benefits, topological distillation faces scalability challenges due to the computational cost of TDA and potential errors from PD-to-PI approximations. Its effectiveness across diverse point cloud tasks and noisy data remains limited, requiring further research for maturity.

\section{Methodology}

\subsection{Overview of the Framework}


Our framework proposes a distillation approach to develop lightweight student models for point-cloud processing, targeting both output replication and internal representation alignment. As illustrated in Figure~\ref{fig:architecture-overall}, the teacher model is a pre-trained high capacity point cloud network that extracts rich semantic and geometric features. 

The student model is trained to emulate the teacher’s behavior through two proposed mechanisms:

\begin{itemize}
    \item \textbf{Topological Distillation}: Both teacher and student feature representations undergo Topological Data Analysis (TDA) to capture global structural information. Chamfer loss is applied to align the topological signatures, encouraging the student to preserve critical geometric structures.
    
    \item \textbf{Gradient-Guided Feature Alignment}: Feature maps are compared from the teacher and student, where gradients with respect to the features guide the alignment process.
\end{itemize}

\begin{figure}[h]
    \centering
    \includegraphics[width=\textwidth]{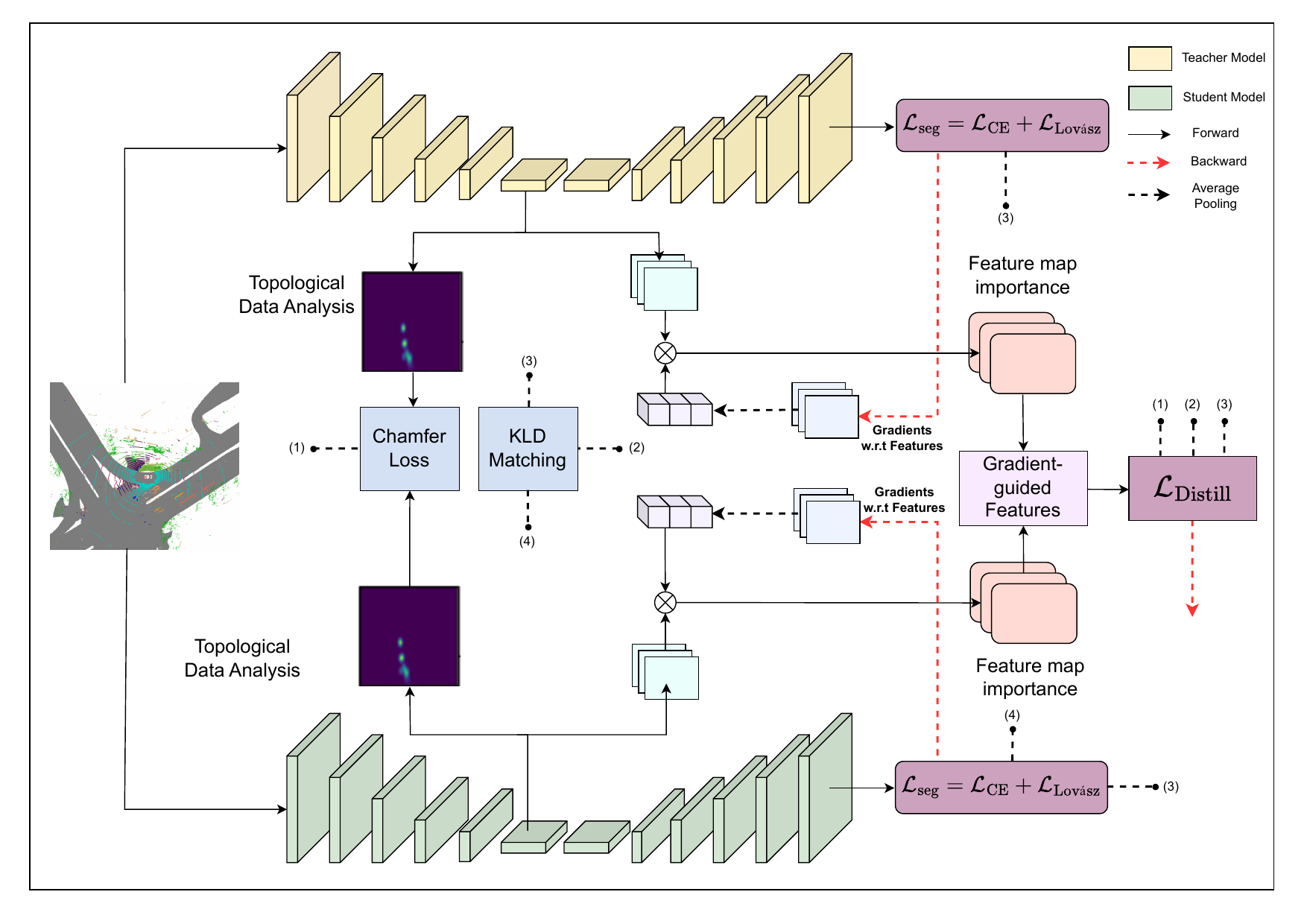}
    \caption{Overview of the proposed knowledge distillation framework for point cloud processing. The framework transfers knowledge from a teacher model (yellow) to a student model (green) using topological data analysis (TDA) (1),  KLD matching (2), student semantic segmentation loss (3), and gradient-guided feature alignment (4). The total loss is the combination of (1), (2), (3) and gradient-guided feature loss.}
    \label{fig:architecture-overall}
\end{figure}

\subsection{Topology-Aware Distillation Learning}

Traditional knowledge distillation focuses on aligning Euclidean feature maps, which may fail to capture the structural and geometric relationships inherent in 3D point clouds. To address this, we introduce a topology-aware distillation framework that ensures the student model preserves essential topological structures.

Given a point cloud $X \in \mathbb{R}^{N \times 3}$, we construct a simplicial complex $\mathcal{K}(X)$ through Vietoris-Rips filtration and extract persistence diagrams $D_T$ and $D_S$ for the teacher and the student, respectively. We then define a topology loss based on the Chamfer Distance between persistence diagrams:
\begin{equation}
\mathcal{L}_{\text{topo}} = \mathcal{L}_{\text{CD}}(D_T, D_S).
\end{equation}

This topology loss serves as a regularizer, promoting topological consistency between teacher and student models without overwhelming feature-based alignment objectives. We further bound the gradient of $\mathcal{L}_{\text{topo}}$ to ensure a stable optimization. A theoretical justification and detailed discussion comparing Chamfer Distance and Wasserstein Distance are provided in Appendix~\ref{appendix:topology-proof}.









\subsection{Gradient-Guided Knowledge Distillation}
\label{subsection:gkd}
To ensure that the student model learns the most task-relevant features from the teacher and inspired from~\cite{lan2023gradientguidedknowledgedistillationobject}, we propose a gradient-guided feature alignment mechanism for the semantic segmentation task that leverages the gradients of the task-specific loss to prioritize important features during distillation. Formally, we define the importance weight of the \(k\)-th feature channel at layer \(l\) as:
\begin{equation}
w_l^k = \frac{1}{N} \sum_{i=1}^{N} \left| \frac{\partial \mathcal{L}_{\text{task}}}{\partial F_{i,k}^l} \right|,
\end{equation}
where \(\mathcal{L}_{\text{task}}\) is the task loss (e.g., cross-entropy for semantic segmentation), and \(F_{i,k}^l \in \mathbb{R}\) is the feature activation for point \(i\), channel \(k\), and layer \(l\). These gradients are averaged over all \(N\) points to obtain a channel-wise importance score. We then scale the feature representations for each point and channel:
\begin{equation}
\tilde{F}_{i,k}^l = w_l^k F_{i,k}^l.
\end{equation}
Next, we compute a gradient-weighted feature map by aggregating the absolute values of the scaled features across all \(C\) channels for each point:
\begin{equation}
M^l_i = \sum_{k=1}^{C} |\tilde{F}_{i,k}^l|, \quad M^l = \text{Norm}([M^l_1, M^l_2, \ldots, M^l_N]),
\end{equation}
where \(\text{Norm}\) is min-max normalization, defined as \(\text{Norm}(x_i) = \frac{x_i - \min(x)}{\max(x) - \min(x)}\) for a vector \(x = [x_1, \ldots, x_N]\). The resulting feature maps for the teacher and the student are denoted \(M_T^l\) and \(M_S^l\), respectively. The gradient-guided feature alignment loss is then:
\begin{equation}
\mathcal{L}_{\text{grad}} = \frac{1}{N} \sum_{l=1}^{L} \sum_{i=1}^{N} \left| M_{i,T}^l - M_{i,S}^l \right|.
\end{equation}
This loss encourages the student to align its task-relevant features with the teacher’s, improving the effectiveness of knowledge transfer in 3D point cloud semantic segmentation.

\subsection{Overall Distillation Objective}

To complement the alignment based on topological and saliency, we incorporate Kullback-Leibler divergence (KLD)~\cite{kullback1951kullback} to improve the consistency of the distribution level between teacher and student. By applying KLD to softened output logits or intermediate features, the student is guided to mimic not only the teacher's predictions but also the underlying confidence distribution, which helps capture class relationships and enhances generalization.

By incorporating multiple feature alignment strategies, the overall distillation objective integrates topology-aware feature transfer, gradient-guided feature alignment, distribution-level matching via KLD, Chamfer distance-based distribution matching, and segmentation loss. The final distillation loss is formulated as follows:

\begin{equation}
\mathcal{L}_{\text{Distill}} = \mathcal{L}_{\text{topo}} + \lambda_1 \mathcal{L}_{\text{grad}} + \lambda_2 \mathcal{L}_{\text{KLD}} + \lambda_3 \mathcal{L}_{\text{seg}},
\end{equation}

where:
\begin{itemize}
    \item \( \mathcal{L}_{\text{topo}} \) enforces the preservation of high-level topological structures between teacher and student models.
    \item \( \mathcal{L}_{\text{grad}} \) ensures that the most important and informative features are transferred effectively.
    \item \( \mathcal{L}_{\text{KLD}} \) minimizes the discrepancy between the feature distributions using the Kullback-Leibler divergence.
    \item \( \mathcal{L}_{\text{seg}} \) represents the standard segmentation loss, ensuring that the student maintains an accurate point classification.
\end{itemize}

The hyperparameters \( \lambda_1, \lambda_2, \) and \( \lambda_3 \) control the relative importance of each loss component. By optimizing this composite objective, the student model is guided to capture both the \textbf{ global topological} and \textbf{local geometric} properties, enhancing its generalization capability in point cloud tasks.

\section{Experiments and Results}

To assess our topology-aware distillation framework, we performed experiments on three prominent autonomous driving datasets: SemanticKITTI~\cite{behley2019semantickitti}, Waymo Open Dataset~\cite{caesar2020nuscenes}, and NuScenes~\cite{sun2020waymo}. These datasets offer large-scale, real-world point-cloud sequences, ideal for benchmarking point-cloud processing techniques. We provide detailed descriptions of the datasets, training protocols, and evaluation procedures in the Appendix~\ref{appendix:experimental-details}.

\subsection{Experimental Results}

\textbf{Comparison with previous state-of-the-art LiDAR semantic segmentation models.} The results in Table~\ref{tab:compare-nusences-non-KD} highlight the performance of various previous LiDAR semantic segmentation methods compared to our proposed distillation approach on the nuScenes test dataset, with mIoU scores ranging from 65.5\% (RangeNet++)~\cite{milioto2019rangenet++} to 78.17\% (Student with KD). Among these methods, SDSeg3D~\cite{sdseg3d} achieved the highest mIoU (77.7\%), followed closely by RPVNet~\cite{rpvnet} (77.6\%) and GFNet~\cite{gfnet} (77.6\%). In particular, the knowledge-distilled (KD) version of the student model surpassed all previous approaches with 78.17\% mIoU, demonstrating the effectiveness of knowledge distillation in improving segmentation accuracy. However, despite these advances, all these models still have a lower performance than Point Transformer V3~\cite{wu2024pointtransformerv3simpler}, which achieves 83\% mIoU on the nuScenes test dataset, setting a new benchmark in LiDAR semantic segmentation.

\textbf{Comparison with state-of-the-art LiDAR knowledge distillation semantic segmentation models.} The results in Table~\ref{fig:compare-KD} compare various knowledge distillation methods on the nuScenes dataset, FPS and the number of parameters. Point Transformer V3 (teacher) achieves the highest performance with 83\% mIoU, but has a relatively high parameter count (46.16M) and a lower FPS (16.61). Among student models, CMDFusion\cite{cmdfusion} achieves the highest mIoU (80.8\%), coming closest to the teacher model while maintaining a significantly lower parameter count (7.04M) and FPS of 8. Our proposed distilled version reaches 78.01\% mIoU, outperforming Cylinder3D 0.5× + PVD~\cite{point-to-voxel} (76\%) and all other models based on KD. Additionally, it has the highest FPS (27.64), making it the most efficient model in terms of speed, while also being the most lightweight (2.78M parameters). Cylinder3D 0.5× + PVD~\cite{point-to-voxel} and TPV-IGKD~\cite{tpv-igkd} fall behind in terms of mIoU, with 76\% and 67.2\%, respectively, while also having significantly larger model sizes (50M and 146.18M). UMKD~\cite{umkd} achieves 71.3\% mIoU, slightly outperforming TPV-IGKD, but with a smaller parameter count (21.8M). In general, our proposed KD method balances accuracy, speed, and efficiency better than other knowledge distillation methods.

\begin{table*}[h]
    \centering
    \renewcommand{\arraystretch}{1.2}
    \setlength{\tabcolsep}{3pt} 
    \resizebox{\textwidth}{!}{ 
    \begin{tabular}{l|c|cccccccccccccccc}
        \toprule
        Methods & \textbf{mIoU} 
        & \rotatebox{90}{barrier} 
        & \rotatebox{90}{bicycle} 
        & \rotatebox{90}{bus} 
        & \rotatebox{90}{car} 
        & \rotatebox{90}{construction} 
        & \rotatebox{90}{motorcycle} 
        & \rotatebox{90}{pedestrian} 
        & \rotatebox{90}{traffic-cone} 
        & \rotatebox{90}{trailer} 
        & \rotatebox{90}{truck} 
        & \rotatebox{90}{driveable} 
        & \rotatebox{90}{other} 
        & \rotatebox{90}{sidewalk} 
        & \rotatebox{90}{terrain} 
        & \rotatebox{90}{manmade} 
        & \rotatebox{90}{vegetation} \\
        \midrule
        RangeNet++~\cite{milioto2019rangenet++} & 65.5  & 66.0  & 21.3  & 77.2  & 80.9  & 30.2  & 66.8  & 69.6  & 52.1  & 54.2  & 72.3  & 94.1  & 66.6  & 63.5  & 70.1  & 83.1  & 79.8 \\
        PolarNet~\cite{zhang2020polarnet} & 71.0  & 74.7  & 28.2  & 85.3  & 90.9  & 35.1  & 77.5  & 71.3  & 58.8  & 57.4  & 76.1  & 96.5  & 71.1  & 74.7  & 74.0  & 87.3  & 85.7 \\
        SalsaNext~\cite{cortinhal2020salsanext} & 72.2  & 74.8  & 34.1  & 85.9  & 88.4  & 42.2  & 72.4  & 72.2  & 63.1  & 61.3  & 76.5  & 96.0  & 70.8  & 71.2  & 71.5  & 86.7  & 84.4 \\
        Cylinder3D~\cite{zhu2021cylindrical} & 76.1  & 76.4  & 40.3  & 91.2  & 93.8  & 51.3  & 78.0  & 78.9  & 64.9  & 62.1  & 84.4  & 96.8  & 71.6  & 76.4  & 75.4  & 90.5  & 87.4 \\
        C3D\_0.5× + KA~\cite{hou2022point} & 73.9  & 74.2  & 36.3  & 88.5  & 87.6  & 47.1  & 76.9  & 78.3  & 63.5  & 57.6  & 83.4  & 94.9  & 70.3  & 73.8  & 73.2  & 88.4  & 86.3 \\
        AMVNet~\cite{AMVNet} & 76.1 & 79.8 & 32.4 & 87.4 & 90.4 & 62.5 & 81.9 & 75.3 & 72.3 & 83.5 & 65.1 & 97.4 & 67.0 & 78.8 & 74.6 & 90.8 & 87.9 \\
        2DPASS~\cite{2dpass} & 76.2 & 75.3 & 43.5 & 95.3 & 91.2 & 54.5 & 78.9 & 78.2 & 62.1 & 70.0 & 84.2 & 96.3 & 73.2 & 74.2 & 74.9 & 89.8 & 85.9 \\
        SDSeg3D~\cite{sdseg3d} & 77.7 & 77.5 & 49.4 & 93.9 & 92.5 & 54.9 & 86.7 & 80.1 & 67.8 & 65.7 & 86.0 & 96.4 & 74.0 & 74.9 & 74.5 & 86.0 & 82.8 \\
        RPVNet~\cite{rpvnet} & 77.6 & 78.2 & 43.4 & 92.7 & 93.2 & 49.0 & 85.7 & 80.6 & 66.9 & 69.4 & 80.5 & 96.9 & 73.5 & 75.9 & 76.0 & 90.6 & 88.9 \\
        GFNet~\cite{gfnet} & 76.1 & 81.1 & 31.6 & 76.0 & 90.5 & 60.2 & 80.7 & 75.3 & 71.8 & 82.5 & 65.1 & 97.8 & 67.0 & 80.4 & 76.2 & 91.8 & 88.9 \\
        SVASeg~\cite{svgseg} & 74.7 & 74.1 & 44.5 & 88.4 & 86.6 & 48.2 & 72.4 & 72.3 & 61.3 & 57.5 & 75.7 & 96.3 & 70.7 & 74.7 & 74.6 & 87.3 & 86.9 \\
        \midrule
        \textbf{Student w.o KD} & 76.08  & 76.14  & 46.66  & 89.99  & 92.18  & 40.36  & 83.90  & 78.35  & 63.18  & 68.18  & 81.74  & 96.32  & 72.78  & 73.65  & 75.39  & 89.72  & 88.77 \\
        
        \midrule
        \textbf{Student with KD} & \textbf{78.17} & \textbf{79.11} & \textbf{48.28} & \textbf{92.87} & \textbf{94.31} & \textbf{41.29} & \textbf{85.68} & \textbf{82.93} & \textbf{62.36} & \textbf{70.21} & \textbf{80.27} & \textbf{96.75} & \textbf{76.35} & \textbf{74.22} & \textbf{78.84} & \textbf{90.87} & \textbf{89.30} \\

        \bottomrule
    \end{tabular}
    } 
    \caption{Comparison of our proposed method with previous state-of-the-art LiDAR semantic segmentation methods on the nuScenes test dataset. The table reports the mean Intersection over Union (mIoU) for different models across various object categories.}
    \label{tab:compare-nusences-non-KD}
\end{table*}

\begin{table*}[h]
    \centering
    \renewcommand{\arraystretch}{1.2}
    \setlength{\tabcolsep}{3pt} 
    \resizebox{\textwidth}{!}{ 
    \begin{tabular}{l|c|ccccccccccccccccccc}
        \toprule
        Methods & mIoU 
        & \rotatebox{90}{car} 
        & \rotatebox{90}{bicycle} 
        & \rotatebox{90}{motorcycle} 
        & \rotatebox{90}{truck} 
        & \rotatebox{90}{other-vehicle} 
        & \rotatebox{90}{person} 
        & \rotatebox{90}{bicyclist} 
        & \rotatebox{90}{motorcyclist} 
        & \rotatebox{90}{road} 
        & \rotatebox{90}{parking} 
        & \rotatebox{90}{sidewalk} 
        & \rotatebox{90}{other-ground} 
        & \rotatebox{90}{building} 
        & \rotatebox{90}{fence} 
        & \rotatebox{90}{vegetation} 
        & \rotatebox{90}{trunk} 
        & \rotatebox{90}{terrain} 
        & \rotatebox{90}{pole} 
        & \rotatebox{90}{traffic} \\
        \midrule
        SalsaNext~\cite{cortinhal2020salsanext} & 59.5  
        & 91.9  & 48.3  & 38.6  & 38.9  & 31.9  & 60.2  & 59.0  & 19.4  
        & 91.7  & 63.7  & 75.8  & 29.1  & 90.2  & 64.2  & 81.8  & 63.6  & 66.5  & 54.3  & 47.4 \\
        
        KPConv~\cite{thomas2019kpconv} & 58.8  
        & 96.0  & 32.0  & 42.5  & 33.4  & 44.3  & 61.5  & 61.6  & 11.8  
        & 88.8  & 61.3  & 72.7  & 31.6  & 95.0  & 64.2  & 84.8  & 69.2  & 69.1  & 56.4  & 47.4 \\

        FusionNet~\cite{zhang2020deep} & 61.3  
        & 95.3  & 47.5  & 37.7  & 41.8  & 34.5  & 59.5  & 56.8  & 11.9  
        & 91.8  & 68.7  & 77.1  & 30.5  & 90.5  & 69.4  & 84.5  & 69.8  & 68.5  & 60.4  & 46.2 \\

        KPRNet~\cite{kochanov2020kprnet} & 63.1  
        & 95.5  & 54.1  & 47.9  & 23.6  & 42.6  & 65.9  & 65.0  & 16.5  
        & 93.2  & 73.9  & 80.6  & 30.2  & 91.7  & 64.8  & 85.7  & 69.8  & 71.2  & 58.7  & 64.1 \\

        TORNADONet~\cite{gerdzhev2021tornado} & 63.1  
        & 94.2  & 51.2  & 48.1  & 40.0  & 38.2  & 63.6  & 60.1  & 34.9  
        & 89.7  & 66.7  & 74.5  & 28.7  & 91.3  & 65.8  & 85.6  & 71.5  & 70.1  & 58.0  & 49.2 \\

        SPVNAS~\cite{tang2020searching} & 66.4  
        & 97.3  & 51.5  & 50.8  & 59.8  & 58.8  & 65.7  & 62.5  & 43.7  
        & 90.2  & 67.6  & 75.2  & 16.9  & 91.3  & 65.9  & 86.1  & 73.4  & 71.0  & 64.6  & 66.9 \\

        Cylinder3D~\cite{zhu2021cylindrical} & 68.9  
        & 97.1  & 67.6  & 50.8  & 50.8  & 58.5  & 73.7  & 69.2  & 48.0  
        & 92.2  & 65.0  & 77.0  & 32.3  & 90.7  & 66.5  & 85.6  & 72.5  & 69.8  & 62.4  & 66.2 \\

        \midrule
        \textbf{Student w.o KD} & 69.5 & 98.0 & 68.9 & 52.4 & 52.4 & 59.9 & 74.9 & 70.5 & 49.6 & 93.2 & 66.4 & 78.2 & 34.1 & 91.7 & 67.8 & 86.7 & 73.7 & 71.1 & 63.8 & 67.6 \\
        \midrule
        \textbf{Student with KD} & \textbf{74.6} & \textbf{98.3} & \textbf{74.1} & \textbf{60.3} & \textbf{60.3} & \textbf{66.6} & \textbf{79.1} & \textbf{75.4} & \textbf{58.0} & \textbf{94.3} & \textbf{72.0} & \textbf{81.8} & \textbf{45.1} & \textbf{93.1} & \textbf{73.2} & \textbf{88.9} & \textbf{78.1} & \textbf{75.9} & \textbf{69.8} & \textbf{73.0} \\

        \bottomrule
    \end{tabular}
    } 
    \caption{Comparison of semantic segmentation performance on the SemanticKITTI dataset. The table reports the mean Intersection over Union (mIoU) for different models across various object categories.}
    \label{tab:compare-semanticKitti-nonKD}
    \vspace{-0.25cm}
\end{table*}

\begin{table}[h]
    \centering
    \renewcommand{\arraystretch}{1.2}
    \setlength{\tabcolsep}{8pt} 
    \begin{tabular}{l|c|c}
        \toprule
        \textbf{Method} & \textbf{Parameters (Millions)} & \textbf{FPS} \\
        \midrule
        RangeNet++~\cite{milioto2019rangenet++} & 50.0 & 12.5 \\
        PolarNet~\cite{zhang2020polarnet} & 45.0 & 16.7 \\
        SalsaNext~\cite{cortinhal2020salsanext} & 6.7 & 23.8 \\
        Cylinder3D~\cite{zhu2021cylindrical} & 53.0 & 12.0 \\
        SalsaNext~\cite{cortinhal2020salsanext} & 6.7 & 25.0 \\
        KPConv~\cite{thomas2019kpconv} & 15.0 & 12.0 \\
        TornadoNet~\cite{gerdzhev2021tornado} & N/A & N/A \\
        SPVNAS~\cite{tang2020searching} & 1.0 & 16.0 \\
        \midrule
        PTv3 (Teacher) & 46.16 & 16.61 \\
        \textbf{Our Student} & \textbf{2.78} & \textbf{27.64} \\
    
        \bottomrule
    \end{tabular}
    \caption{Comparison of the number of parameters (in millions) and inference speed (frames per second, FPS) for different LiDAR semantic segmentation methods on Nuscenes.}
    \label{tab:params_fps}
\end{table}

We further evaluate our method on the Waymo benchmark in Table~\ref{tab:waymo-comparison}. Among methods that use both LiDAR and camera inputs (LC), UMKD (SwiftNet34)(B) achieves the highest validation mIoU of 73.0, while MSeg3D slightly leads on the test set with 70.5. In contrast, methods that use only LiDAR (L) show competitive results, with LidarMultiNet~\cite{ye2022lidarmultinet} achieving the best validation performance of 73.8, surpassing all other methods, including those that use both modalities. However, this approach uses 3D bounding boxes as an additional supervision signal
during training. Our proposed student model also performs well with a test mIoU of 71.3 and comparable validation performance with superiority in model parameter size (110.6M vs 2.78M), indicating strong generalization even when using only LiDAR data.

\begin{table}[ht]
\centering
\begin{tabular}{lcc}
\hline
\textbf{Method} & \textbf{Input} & \textbf{mIoU} (test / val) \\
\hline
MSeg3D~\cite{li2023mseg3d} & LC & 70.5 / 69.6 \\
UMKD (B)~\cite{umkd} & LC & 70.0 / 71.1 \\
UMKD (SwiftNet34)(B)~\cite{umkd} & LC & 70.6 / 73.0 \\
PMF~\cite{zhuang2021perception} & LC & - / 58.2 \\
\hline
SalsaNext~\cite{salsanext} & L & 55.8 / - \\
Realsurf~\cite{umkd} & L & 67.6 / - \\
SPVCNN++~\cite{tang2020searching} & L & 67.7 / - \\
VueNet3D~\cite{umkd} & L & 68.6 / - \\
SphereFormer~\cite{lai2023spherical} & L & - / 69.9 \\
\hline
\textbf{Ours (Student w.o KD)} & L & 68.2 / 66.5 \\ 
\textbf{Ours (Student w KD)} & L & \textbf{69.5} / \textbf{68.7} \\
\textbf{Ours (Teacher)} & L & 71.3 / 69.8\\
\hline
\end{tabular}
\caption{Quantitative Results of Different Approaches on Waymo Open Dataset. The modalities available on Waymo include LiDAR(L), and Camera(C).}
\label{tab:waymo-comparison}
\vspace{-0.25cm}
\end{table}

\subsection{Comprehensive Teacher - Student Analysis}

\begin{table}[h!]
\centering
\resizebox{\textwidth}{!}{%
\begin{tabular}{@{}p{3.8cm}p{3.8cm}p{3.8cm}p{4.2cm}@{}}
\toprule
\textbf{Metric} & \textbf{Teacher Model (46.16M params)} & \textbf{Student Model (2.78M params)} & \textbf{Comparison} \\ \midrule
Total Parameters & 46,160,000 ($\sim$46.16M) & 2,780,000 ($\sim$2.78M) & Student is 16.6$\times$ smaller \\ \midrule
Encoder Depths & (2, 2, 2, 6, 2) (14 blocks) & (1, 1, 1, 2, 1) (6 blocks) & Student has 2.33$\times$ fewer blocks \\ \midrule
Encoder Channels & (32, 64, 128, 256, 512) & (16, 16, 32, 64, 128) & Student channels 2$\times$-4$\times$ smaller \\ \midrule
Encoder Attention Heads & (2, 4, 8, 16, 32) & (1, 1, 2, 4, 8) & Student heads 2$\times$-4$\times$ fewer \\ \midrule
Decoder Depths & (2, 2, 2, 2) (8 blocks) & (1, 1, 1, 1) (4 blocks) & Student has 2$\times$ fewer blocks \\ \midrule
Decoder Channels & (64, 64, 128, 256) & (64, 64, 128, 128) & Student last stage 2$\times$ smaller \\ \midrule
Decoder Attention Heads & (4, 4, 8, 16) & (2, 2, 4, 8) & Student heads 2$\times$ fewer \\ \midrule
Patch Size & 1024 & 1024 & Same \\ \midrule

Encoder (GFLOPs) & 380.25 & 10.36 & Student is 36.70$\times$ lower \\ \midrule
Decoder (GFLOPs) & 116.44 & 33.45 & Student is 3.48$\times$ lower \\ \midrule
Total Attention Compute (Encoder) & 22.58 & 0.60 & Student is 37.63$\times$ lower \\ \midrule

Inference Time (Excl. Overhead) & $\sim$0.0592s & $\sim$0.0362s & Speedup: 1.64$\times$\\ \midrule

Batch Time Inference & $\sim$7.34s & $\sim$4.38s & Student consistently faster \\ \midrule
FPS & $\sim$16.90 & $\sim$27.70 & Student 1.64$\times$ higher FPS \\ \midrule

Fixed Overhead & $\sim$0.018s & $\sim$0.011s & Speed up: 1.58$\times$ \\ \midrule
Attention Mechanism & Flash Attention enabled & Flash Attention enabled & Student benefits more from Flash Attention \\ 
\bottomrule
\end{tabular}%
}
\caption{Comparison between Teacher and Student Models on NuScenes.}
\label{tab:comparison-nuscenes}
\end{table}


\begin{table}[h!]
\centering

\resizebox{\textwidth}{!}{%
\begin{tabular}{@{}p{3.8cm}p{3.8cm}p{3.8cm}p{4.2cm}@{}}
\toprule
\textbf{Metric} & \textbf{Teacher Model (46.16M params)} & \textbf{Student Model (2.78M params)} & \textbf{Comparison} \\ \midrule
Total CPU Time\textsuperscript{b} & 501.109 ms & 203.096 ms & 298.013 ms (2.47$\times$) \\ \midrule
Total CUDA Time & 427.305 ms & 102.068 ms & 325.237 ms (4.19$\times$) \\ \midrule
$\text{MM}_{\text{add}}$ (CUDA Memory)\textsuperscript{a} & 16.05 GB & 3.57 GB & 12.48 GB (4.5$\times$) \\ \midrule
$\text{MM}_{\text{add}}$ (Self CUDA Time)\textsuperscript{a} & 173.506 ms & 34.749 ms & 138.757 ms (5.0$\times$) \\ \midrule
$\text{Alloc}$ (CUDA Memory) & 8.73 GB & 3.00 GB & 5.73 GB (2.91$\times$) \\ \midrule
$\text{Idx}$ (CUDA Memory) & 7.27 GB & 1.78 GB & 5.49 GB (4.08$\times$) \\ \midrule
$\text{GELU}$ (CUDA Memory) & 6.82 GB & 1.67 GB & 5.15 GB (4.08$\times$) \\ \midrule
$\text{LN}$ (CUDA Memory)\textsuperscript{a} & 4.62 GB & 985.43 MB & 3.66 GB (4.69$\times$) \\ \midrule
$\text{Infer}$ (Self CPU Time) & 47.008 ms & 37.243 ms & 9.765 ms (1.26$\times$) \\ \midrule
$\text{MM}_{\text{add}}$ (Self CPU Time)\textsuperscript{a} & 5.402 ms & 2.715 ms & 2.687 ms (1.99$\times$) \\ \bottomrule
\end{tabular}%
}
\begin{tablenotes}
\small
\item \textbf{a} - Operations contributing to attention layers (e.g., matrix multiplications for $Q$, $K$, $V$ computations, and layer normalization).
\item \textbf{b} - Total CPU Time is reported from the profiling run with CUDA time measurements; a separate memory-focused run reports 278.804 ms (Teacher) and 92.436 ms (Student), yielding a 3.02$\times$ ratio.
\end{tablenotes}
\caption{Memory and Time Usage Comparison between Teacher and Student Models.}
\label{tab:memory_comparison}
\vspace{-1cm}
\end{table}

Tables~\ref{tab:comparison-nuscenes} and~\ref{tab:memory_comparison} compare the Teacher (46.16M params) and Student (2.78M params) models in NuScenes, showcasing the efficiency benefits of knowledge distillation for semantic segmentation. The Teacher model, with 16.6$\times$ more parameters than the Student, has a deeper architecture suited for high-accuracy tasks on powerful hardware. In contrast, the Student’s lightweight design, featuring fewer encoder and decoder blocks, attention heads, and channels, significantly reduces computational overhead. Specifically, the Student achieves a 36.70$\times$ reduction in encoder FLOPs and a 37.63$\times$ decrease in total attention compute, reflecting its streamlined transformer architecture. This efficiency translates to a 1.64$\times$ faster inference time (0.0362s vs. 0.0592s) and a 1.64$\times$ higher FPS (27.70 vs. 16.90), with a 1.68$\times$ reduction in batch inference time, making it suitable for real-time applications. Additionally, Student uses 4.5$\times$ less peak CUDA memory (3.57 GB vs 16.05 GB), which benefits more from Flash Attention optimizations. In terms of time and memory usage, the total CPU time of the teacher (501.109 ms) and the CUDA time (427.305 ms) are 2.47$\times$ and 4.19$\times$ higher than that of the student (203.096 ms, 102.068 ms), respectively. At the operational level, the teacher's $\text{MM}_{\text{add}}$ operation consumes 16.05 GB of CUDA memory and 173.506 ms, which are 4.5$\times$ and 5.0$\times$ more than the student's 3.57 GB and 34.749 ms. The Teacher also requires 2.91$\times$ to 4.69$\times$ more memory for operations like $\text{Alloc}$, $\text{Idx}$, and $\text{LN}$, underscoring its higher resource demands. Although the Teacher excels in accuracy on high performance hardware, the Student's reduced memory footprint and faster execution make it ideal for real-time deployment on resource-constrained edge devices, competitive with efficient non-KD models like SparseConv~\cite{choy20194d} and KPConv~\cite{thomas2019kpconv}, with potential accuracy trade-offs worth exploring further.

\section{Discussion and Future Work}
\label{section:discussion}

This work demonstrates that incorporating topological priors and gradient-guided feature alignment significantly enhances the knowledge distillation process for point-cloud semantic segmentation. Using structural insights from persistent homology and prioritizing tasks-relevant features, the proposed student model achieves a strong trade-off between accuracy and efficiency, making it highly suitable for deployment in resource-constrained settings.

However, there are several directions worth exploring further. First, while the topology-aware loss effectively captures global geometric structures, it may be sensitive to the filtration scale used in persistence diagram computation. Future work could investigate adaptive or learned filtration strategies to improve robustness across diverse scene types. Second, although the student model generalizes well across multiple datasets, its reliance on a fixed student architecture may limit flexibility. Exploring neural architecture search or adapting a task-aware model could offer additional performance gains. Finally, our method currently distills knowledge in a one-to-one teacher-student setup; extending this to multi-teacher or collaborative distillation frameworks or exploring a one-stage joint training paradigm, where the teacher and student are optimized simultaneously, could further streamline the training process and improve representation alignment could further enhance generalization, especially in complex outdoor environments.

Unlike traditional tools such as Ripser++~\cite{zhang2020gpu}, which construct Vietoris-Rips complexes through a full filtration from $\epsilon = 0$ to a maximum threshold, our implementation approximates topological characteristics at multiple fixed scales to ensure efficiency and differentiability within neural pipelines. Although this snapshot-based approach may not capture all intermediate birth-and-death pairs, it provides sufficient coverage across representative scales while keeping the computational overhead manageable.


\bibliographystyle{plain}
\bibliography{main}

\begin{thebibliography}{10}

\bibitem{behley2019semantickitti}
Jens Behley, Martin Garbade, Andres Milioto, Jonas Quenzel, Sven Behnke, Cyrill
  Stachniss, and Jurgen Gall.
\newblock Semantickitti: A dataset for semantic scene understanding of lidar
  sequences.
\newblock {\em Proceedings of the IEEE International Conference on Computer
  Vision (ICCV)}, pages 9297--9307, 2019.

\bibitem{caesar2020nuscenes}
Holger Caesar, Alex Bankiti, Alexander~H Lang, Sourabh Vora, Venice~Erin Liong,
  Qiang Xu, Anush Krishnan, Yu~Pan, Giancarlo Baldan, and Oscar Beijbom.
\newblock nuscenes: A multimodal dataset for autonomous driving.
\newblock {\em Proceedings of the IEEE Conference on Computer Vision and
  Pattern Recognition (CVPR)}, pages 11621--11631, 2020.

\bibitem{3D-Point-Cloud-Compression-A-Survey}
Chao Cao, Marius Preda, and Titus Zaharia.
\newblock 3d point cloud compression: A survey.
\newblock In {\em Proceedings of the 24th International Conference on 3D Web
  Technology}, Web3D '19, page 1–9, New York, NY, USA, 2019. Association for
  Computing Machinery.

\bibitem{cmdfusion}
Jun Cen, Shiwei Zhang, Yixuan Pei, Kun Li, Hang Zheng, Maochun Luo, Yingya
  Zhang, and Qifeng Chen.
\newblock Cmdfusion: Bidirectional fusion network with cross-modality knowledge
  distillation for lidar semantic segmentation.
\newblock {\em IEEE Robotics and Automation Letters}, 9(1):771--778, 2024.

\bibitem{chen2017multi}
Xiaozhi Chen, Huimin Ma, Ji~Wan, Bo~Li, and Tian Xia.
\newblock Multi-view 3d object detection network for autonomous driving.
\newblock In {\em Proceedings of the IEEE conference on Computer Vision and
  Pattern Recognition}, pages 1907--1915, 2017.

\bibitem{choi2018pact}
Jungwook Choi, Swagath Venkataramani, Vijayalakshmi~(Viji) Srinivasan, Kailash
  Gopalakrishnan, Zhuo Wang, and Pierce Chuang.
\newblock Accurate and efficient 2-bit quantized neural networks.
\newblock In A.~Talwalkar, V.~Smith, and M.~Zaharia, editors, {\em Proceedings
  of Machine Learning and Systems}, volume~1, pages 348--359, 2019.

\bibitem{choy20194d}
Christopher Choy, JunYoung Gwak, and Silvio Savarese.
\newblock 4d spatio-temporal convnets: Minkowski convolutional neural networks.
\newblock In {\em Proceedings of the IEEE/CVF conference on computer vision and
  pattern recognition}, pages 3075--3084, 2019.

\bibitem{salsanext}
Tiago Cortinhal, George Tzelepis, and Eren~Erdal Aksoy.
\newblock {SalsaNext: Fast, Uncertainty-Aware Semantic Segmentation of LiDAR
  Point Clouds}.
\newblock In {\em Advances in Visual Computing: 15th International Symposium,
  ISVC 2020, San Diego, CA, USA, October 5--7, 2020, Proceedings, Part II},
  volume 12510 of {\em Lecture Notes in Computer Science}, pages 207--222.
  Springer, 2020.

\bibitem{cortinhal2020salsanext}
Tiago Cortinhal, George Tzelepis, and Eren Erdal~Aksoy.
\newblock Salsanext: Fast, uncertainty-aware semantic segmentation of lidar
  point clouds.
\newblock In {\em Advances in Visual Computing: 15th International Symposium,
  ISVC 2020, San Diego, CA, USA, October 5--7, 2020, Proceedings, Part II 15},
  pages 207--222. Springer, 2020.

\bibitem{dong2022finding}
Runpei Dong, Zhanhong Tan, Mengdi Wu, Linfeng Zhang, and Kaisheng Ma.
\newblock Finding the task-optimal low-bit sub-distribution in deep neural
  networks.
\newblock In {\em International Conference on Machine Learning}, pages
  5343--5359. PMLR, 2022.

\bibitem{geiger2012we}
Andreas Geiger, Philip Lenz, and Raquel Urtasun.
\newblock Are we ready for autonomous driving? the kitti vision benchmark
  suite.
\newblock In {\em 2012 IEEE conference on computer vision and pattern
  recognition}, pages 3354--3361. IEEE, 2012.

\bibitem{gerdzhev2021tornado}
Martin Gerdzhev, Ryan Razani, Ehsan Taghavi, and Liu Bingbing.
\newblock Tornado-net: multiview total variation semantic segmentation with
  diamond inception module.
\newblock In {\em 2021 IEEE International Conference on Robotics and Automation
  (ICRA)}, pages 9543--9549. IEEE, 2021.

\bibitem{pointCloudCompress}
Tim Golla and Reinhard Klein.
\newblock Real-time point cloud compression.
\newblock In {\em 2015 IEEE/RSJ International Conference on Intelligent Robots
  and Systems (IROS)}, pages 5087--5092, 2015.

\bibitem{gomez2016pl}
Ruben Gomez-Ojeda, Jesus Briales, and Javier Gonzalez-Jimenez.
\newblock Pl-svo: Semi-direct monocular visual odometry by combining points and
  line segments.
\newblock In {\em 2016 IEEE/RSJ International Conference on Intelligent Robots
  and Systems (IROS)}, pages 4211--4216. IEEE, 2016.

\bibitem{guo2021pct}
Meng-Hao Guo, Jun-Xiong Cai, Zheng-Ning Liu, Tai-Jiang Mu, Ralph~R Martin, and
  Shi-Min Hu.
\newblock Pct: Point cloud transformer.
\newblock {\em Computational Visual Media}, 7:187--199, 2021.

\bibitem{guo2021liga}
Xiaoyang Guo, Shaoshuai Shi, Xiaogang Wang, and Hongsheng Li.
\newblock Liga-stereo: Learning lidar geometry aware representations for
  stereo-based 3d detector.
\newblock In {\em Proceedings of the IEEE/CVF international conference on
  computer vision}, pages 3153--3163, 2021.

\bibitem{hahn2019selfknowledgedistillationnaturallanguage}
Sangchul Hahn and Heeyoul Choi.
\newblock Self-knowledge distillation in natural language processing.
\newblock In {\em Proceedings of the International Conference on Recent
  Advances in Natural Language Processing (RANLP)}, pages 423--430, 2019.

\bibitem{halperin2025pointcloudbasedscene}
Dan Halperin and Niklas Eisl.
\newblock Point cloud based scene segmentation: A survey, 2025.

\bibitem{han2015deep}
Song Han, Huizi Mao, and William~J. Dally.
\newblock Deep compression: Compressing deep neural networks with pruning,
  trained quantization and huffman coding.
\newblock In {\em Proceedings of the International Conference on Learning
  Representations (ICLR)}, 2016.

\bibitem{he2021distillersystematicstudymodel}
Haoyu He, Xingjian Shi, Jonas Mueller, Sheng Zha, Mu~Li, and George Karypis.
\newblock Distiller: A systematic study of model distillation methods in
  natural language processing.
\newblock In {\em Proceedings of the Second Workshop on Simple and Efficient
  Natural Language Processing}, pages 119--133, Virtual, 2021. Association for
  Computational Linguistics.

\bibitem{hinton2014distilling}
Geoffrey Hinton, Oriol Vinyals, and Jeff Dean.
\newblock Distilling the knowledge in a neural network.
\newblock In {\em Advances in Neural Information Processing Systems (NeurIPS)},
  pages 1--2, 2014.

\bibitem{hinton2015distilling}
Geoffrey Hinton, Oriol Vinyals, and Jeff Dean.
\newblock Distilling the knowledge in a neural network.
\newblock {\em arXiv preprint arXiv:1503.02531}, 2015.

\bibitem{point-to-voxel}
Yuenan Hou, Xinge Zhu, Yuexin Ma, Chen~Change Loy, and Yikang Li.
\newblock Point-to-voxel knowledge distillation for lidar semantic
  segmentation.
\newblock In {\em Proceedings of the IEEE/CVF conference on computer vision and
  pattern recognition}, pages 8479--8488, 2022.

\bibitem{Hou_2022_CVPR}
Yuenan Hou, Xinge Zhu, Yuexin Ma, Chen~Change Loy, and Yikang Li.
\newblock Point-to-voxel knowledge distillation for lidar semantic
  segmentation.
\newblock In {\em Proceedings of the IEEE/CVF Conference on Computer Vision and
  Pattern Recognition (CVPR)}, pages 8479--8488, June 2022.

\bibitem{hou2022point}
Yuenan Hou, Xinge Zhu, Yuexin Ma, Chen~Change Loy, and Yikang Li.
\newblock Point-to-voxel knowledge distillation for lidar semantic
  segmentation.
\newblock In {\em Proceedings of the IEEE/CVF conference on computer vision and
  pattern recognition}, pages 8479--8488, 2022.

\bibitem{howard2019searching}
Andrew Howard, Mark Sandler, Grace Chu, Liang-Chieh Chen, Bo~Chen, Mingxing
  Tan, Weijun Wang, Yukun Zhu, Ruoming Pang, Vijay Vasudevan, et~al.
\newblock Searching for mobilenetv3.
\newblock In {\em Proceedings of the IEEE/CVF international conference on
  computer vision}, pages 1314--1324, 2019.

\bibitem{jeon2024leveragingtopologicalguidanceimproved}
Eun~Som Jeon, Rahul Khurana, Aishani Pathak, and Pavan Turaga.
\newblock Leveraging topological guidance for improved knowledge distillation.
\newblock In {\em Proceedings of the ICML 2024 Workshop on Geometry-grounded
  Representation Learning and Generative Modeling}, 2024.

\bibitem{kim2024do}
Jungeun Kim, Junwon You, Dongjin Lee, Ha~Young Kim, and Jae-Hun Jung.
\newblock Do topological characteristics help in knowledge distillation?
\newblock In {\em Forty-first International Conference on Machine Learning},
  2024.

\bibitem{kochanov2020kprnet}
Deyvid Kochanov, Fatemeh~Karimi Nejadasl, and Olaf Booij.
\newblock Kprnet: Improving projection-based lidar semantic segmentation.
\newblock {\em arXiv preprint arXiv:2007.12668}, 2020.

\bibitem{kullback1951kullback}
Solomon Kullback.
\newblock Kullback-leibler divergence, 1951.

\bibitem{lai2023spherical}
Xin Lai, Yukang Chen, Fanbin Lu, Jianhui Liu, and Jiaya Jia.
\newblock Spherical transformer for lidar-based 3d recognition.
\newblock In {\em Proceedings of the IEEE/CVF Conference on Computer Vision and
  Pattern Recognition (CVPR)}, pages 17545--17555, 2023.

\bibitem{lan2023gradientguidedknowledgedistillationobject}
Qizhen Lan and Qing Tian.
\newblock Gradient-guided knowledge distillation for object detectors.
\newblock In {\em Proceedings of the IEEE/CVF Winter Conference on Applications
  of Computer Vision (WACV)}, pages 423--432, 2024.

\bibitem{lang2019pointpillars}
Alex~H Lang, Sourabh Vora, Holger Caesar, Lubing Zhou, Jiong Yang, and Oscar
  Beijbom.
\newblock Pointpillars: Fast encoders for object detection from point clouds.
\newblock In {\em Proceedings of the IEEE/CVF conference on computer vision and
  pattern recognition}, pages 12697--12705, 2019.

\bibitem{li2016vehicle}
Bo~Li, Tianlei Zhang, and Tian Xia.
\newblock Vehicle detection from 3d lidar using fully convolutional network.
\newblock In {\em Proceedings of Robotics: Science and Systems (RSS)}, 2016.

\bibitem{tpv-igkd}
Jia-Chen Li, Jun-Guo Lu, Ming Wei, Hong-Yi Kang, and Qing-Hao Zhang.
\newblock Tpv-igkd: Image-guided knowledge distillation for 3d semantic
  segmentation with tri-plane-view.
\newblock {\em IEEE Transactions on Intelligent Transportation Systems},
  25(8):10405--10416, 2024.

\bibitem{sdseg3d}
Jiale Li, Hang Dai, and Yong Ding.
\newblock Self-distillation for robust lidar semantic segmentation in
  autonomous driving.
\newblock In {\em European conference on computer vision}, pages 659--676.
  Springer, 2022.

\bibitem{li2023mseg3d}
Jiale Li, Hang Dai, Hao Han, and Yong Ding.
\newblock Mseg3d: Multimodal 3d semantic segmentation for autonomous driving.
\newblock In {\em Proceedings of the IEEE/CVF Conference on Computer Vision and
  Pattern Recognition (CVPR)}, pages 21694--21704, 2023.

\bibitem{AMVNet}
Venice~Erin Liong, Thi Ngoc~Tho Nguyen, Sergi Widjaja, Dhananjai Sharma, and
  Zhuang~Jie Chong.
\newblock Amvnet: Assertion-based multi-view fusion network for lidar semantic
  segmentation.
\newblock {\em CoRR}, abs/2012.04934, 2020.

\bibitem{liu2019structured}
Yifan Liu, Ke~Chen, Chris Liu, Zengchang Qin, Zhenbo Luo, and Jingdong Wang.
\newblock Structured knowledge distillation for semantic segmentation.
\newblock In {\em Proceedings of the IEEE/CVF conference on computer vision and
  pattern recognition}, pages 2604--2613, 2019.

\bibitem{liu2019metapruning}
Zechun Liu, Haoyuan Mu, Xiangyu Zhang, Zichao Guo, Xin Yang, Kwang-Ting Cheng,
  and Jian Sun.
\newblock Metapruning: Meta learning for automatic neural network channel
  pruning.
\newblock In {\em Proceedings of the IEEE/CVF international conference on
  computer vision}, pages 3296--3305, 2019.

\bibitem{louizos2018learning}
Christos Louizos, Max Welling, and Diederik~P. Kingma.
\newblock Learning sparse neural networks through $l_0$ regularization.
\newblock In {\em Proceedings of the International Conference on Learning
  Representations (ICLR)}, 2018.

\bibitem{ma2018shufflenet}
Ningning Ma, Xiangyu Zhang, Hai-Tao Zheng, and Jian Sun.
\newblock Shufflenet v2: Practical guidelines for efficient cnn architecture
  design.
\newblock In {\em Proceedings of the European conference on computer vision
  (ECCV)}, pages 116--131, 2018.

\bibitem{ma2022rethinking}
Xu~Ma, Can Qin, Haoxuan You, Haoxi Ran, and Yun Fu.
\newblock Rethinking network design and local geometry in point cloud: A simple
  residual mlp framework.
\newblock In {\em Proceedings of the International Conference on Learning
  Representations (ICLR)}, 2022.

\bibitem{milioto2019rangenet++}
Andres Milioto, Ignacio Vizzo, Jens Behley, and Cyrill Stachniss.
\newblock Rangenet++: Fast and accurate lidar semantic segmentation.
\newblock In {\em 2019 IEEE/RSJ international conference on intelligent robots
  and systems (IROS)}, pages 4213--4220. IEEE, 2019.

\bibitem{nagel2019data}
Markus Nagel, Mart~van Baalen, Tijmen Blankevoort, and Max Welling.
\newblock Data-free quantization through weight equalization and bias
  correction.
\newblock In {\em Proceedings of the IEEE/CVF International Conference on
  Computer Vision}, pages 1325--1334, 2019.

\bibitem{oh2002development}
Yong-Joo Oh and Yoshio Watanabe.
\newblock Development of small robot for home floor cleaning.
\newblock In {\em Proceedings of the 41st SICE Annual Conference. SICE 2002.},
  volume~5, pages 3222--3223. IEEE, 2002.

\bibitem{qi2017pointnet}
Charles~R Qi, Hao Su, Kaichun Mo, and Leonidas~J Guibas.
\newblock Pointnet: Deep learning on point sets for 3d classification and
  segmentation.
\newblock In {\em Proceedings of the IEEE conference on computer vision and
  pattern recognition}, pages 652--660, 2017.

\bibitem{gfnet}
Haibo Qiu, Baosheng Yu, and Dacheng Tao.
\newblock Gfnet: Geometric flow network for 3d point cloud semantic
  segmentation.
\newblock {\em arXiv preprint arXiv:2207.02605}, 2022.

\bibitem{rashid2020zeroshotknowledgedistillationnatural}
Ahmad Rashid, Vasileios Lioutas, Abbas Ghaddar, and Mehdi Rezagholizadeh.
\newblock Towards zero-shot knowledge distillation for natural language
  processing.
\newblock In {\em Proceedings of the 2021 Conference on Empirical Methods in
  Natural Language Processing (EMNLP)}, pages 6551--6561, 2021.

\bibitem{robert2023efficient}
Damien Robert, Hugo Raguet, and Loic Landrieu.
\newblock Efficient 3d semantic segmentation with superpoint transformer.
\newblock In {\em Proceedings of the IEEE/CVF International Conference on
  Computer Vision}, pages 17195--17204, 2023.

\bibitem{romero2014fitnets}
Adriana Romero, Nicolas Ballas, Samira~Ebrahimi Kahou, Antoine Chassang, Carlo
  Gatta, and Yoshua Bengio.
\newblock Fitnets: Hints for thin deep nets.
\newblock In Yoshua Bengio and Yann LeCun, editors, {\em 3rd International
  Conference on Learning Representations, {ICLR} 2015, San Diego, CA, USA, May
  7-9, 2015, Conference Track Proceedings}, 2015.

\bibitem{sanh2019distilbert}
Victor Sanh, Lysandre Debut, Julien Chaumond, and Thomas Wolf.
\newblock Distilbert, a distilled version of {BERT}: Smaller, faster, cheaper
  and lighter.
\newblock {\em arXiv preprint arXiv:1910.01108}, pages 1--2, 2019.

\bibitem{song2017semantic}
Shuran Song, Fisher Yu, Andy Zeng, Angel~X Chang, Manolis Savva, and Thomas
  Funkhouser.
\newblock Semantic scene completion from a single depth image.
\newblock In {\em Proceedings of the IEEE conference on computer vision and
  pattern recognition}, pages 1746--1754, 2017.

\bibitem{sun2020waymo}
Pei Sun, Henrik Kretzschmar, Xerxes Dotiwalla, Alexandre Chouard, Anand
  Patnaik, Paul Tsui, Yin Guo, Yuning Zhou, Yuning Chai, Benjamin Caine, et~al.
\newblock Scalability in perception for autonomous driving: Waymo open dataset.
\newblock {\em Proceedings of the IEEE Conference on Computer Vision and
  Pattern Recognition (CVPR)}, pages 2446--2454, 2020.

\bibitem{umkd}
Tianfang Sun, Zhizhong Zhang, Xin Tan, Yong Peng, Yanyun Qu, and Yuan Xie.
\newblock Uni-to-multi modal knowledge distillation for bidirectional
  lidar-camera semantic segmentation.
\newblock {\em IEEE Transactions on Pattern Analysis and Machine Intelligence},
  46(12):11059--11072, 2024.

\bibitem{tang2020searching}
Haotian Tang, Zhijian Liu, Shengyu Zhao, Yujun Lin, Ji~Lin, Hanrui Wang, and
  Song Han.
\newblock Searching efficient 3d architectures with sparse point-voxel
  convolution.
\newblock In {\em Computer Vision -- ECCV 2020: 16th European Conference,
  Glasgow, UK, August 23--28, 2020, Proceedings, Part XXVIII}, volume 12373 of
  {\em Lecture Notes in Computer Science}, pages 685--702. Springer, 2020.

\bibitem{thomas2019kpconv}
Hugues Thomas, Charles~R Qi, Jean-Emmanuel Deschaud, Beatriz Marcotegui,
  Fran{\c{c}}ois Goulette, and Leonidas~J Guibas.
\newblock Kpconv: Flexible and deformable convolution for point clouds.
\newblock In {\em Proceedings of the IEEE/CVF international conference on
  computer vision}, pages 6411--6420, 2019.

\bibitem{tung2019similarity}
Frederick Tung and Greg Mori.
\newblock Similarity-preserving knowledge distillation.
\newblock In {\em Proceedings of the IEEE/CVF international conference on
  computer vision}, pages 1365--1374, 2019.

\bibitem{villani2009optimal}
Cédric Villani.
\newblock {\em Optimal Transport: Old and New}, volume 338 of {\em Grundlehren
  der mathematischen Wissenschaften}.
\newblock Springer, 2009.

\bibitem{wang2017cnn}
Peng-Shuai Wang, Yang Liu, Yu-Xiao Guo, Chun-Yu Sun, and Xin Tong.
\newblock O-cnn: Octree-based convolutional neural networks for 3d shape
  analysis.
\newblock {\em ACM Transactions On Graphics (TOG)}, 36(4):1--11, 2017.

\bibitem{wu2024pointtransformerv3simpler}
Xiaoyang Wu, Li~Jiang, Peng-Shuai Wang, Zhijian Liu, Xihui Liu, Yu~Qiao, Wanli
  Ouyang, Tong He, and Hengshuang Zhao.
\newblock Point transformer v3: Simpler, faster, stronger.
\newblock In {\em 2024 IEEE/CVF Conference on Computer Vision and Pattern
  Recognition (CVPR)}, pages 4840--4851, 2024.

\bibitem{wu2022point}
Xiaoyang Wu, Yixing Lao, Li~Jiang, Xihui Liu, and Hengshuang Zhao.
\newblock Point transformer v2: Grouped vector attention and partition-based
  pooling.
\newblock {\em Advances in Neural Information Processing Systems},
  35:33330--33342, 2022.

\bibitem{point_trans_v2}
Xiaoyang Wu, Yixing Lao, Li~Jiang, Xihui Liu, and Hengshuang Zhao.
\newblock Point transformer v2: Grouped vector attention and partition-based
  pooling.
\newblock In {\em Advances in Neural Information Processing Systems (NeurIPS)},
  2022.

\bibitem{rpvnet}
Jianyun Xu, Ruixiang Zhang, Jian Dou, Yushi Zhu, Jie Sun, and Shiliang Pu.
\newblock Rpvnet: A deep and efficient range-point-voxel fusion network for
  lidar point cloud segmentation.
\newblock In {\em Proceedings of the IEEE/CVF international conference on
  computer vision}, pages 16024--16033, 2021.

\bibitem{2dpass}
Xu~Yan, Jiantao Gao, Chaoda Zheng, Chao Zheng, Ruimao Zhang, Shuguang Cui, and
  Zhen Li.
\newblock 2dpass: 2d priors assisted semantic segmentation on lidar point
  clouds.
\newblock In {\em European Conference on Computer Vision}, pages 677--695.
  Springer, 2022.

\bibitem{yang2023swin3d}
Yu-Qi Yang, Yu-Xiao Guo, Jian-Yu Xiong, Yang Liu, Hao Pan, Peng-Shuai Wang, Xin
  Tong, and Baining Guo.
\newblock Swin3d: A pretrained transformer backbone for 3d indoor scene
  understanding.
\newblock {\em arXiv preprint arXiv:2304.06906}, 2023.

\bibitem{ye2022lidarmultinet}
Dongqiangzi Ye, Zixiang Zhou, Weijia Chen, Yufei Xie, Yu~Wang, Panqu Wang, and
  Hassan Foroosh.
\newblock Lidarmultinet: Towards a unified multi-task network for lidar
  perception.
\newblock {\em arXiv preprint arXiv:2209.09385}, 2022.

\bibitem{zhang2020deep}
Feihu Zhang, Jin Fang, Benjamin Wah, and Philip Torr.
\newblock Deep fusionnet for point cloud semantic segmentation.
\newblock In {\em Computer Vision--ECCV 2020: 16th European Conference,
  Glasgow, UK, August 23--28, 2020, Proceedings, Part XXIV 16}, pages 644--663.
  Springer, 2020.

\bibitem{Zhang_2023_CVPR}
Linfeng Zhang, Runpei Dong, Hung-Shuo Tai, and Kaisheng Ma.
\newblock Pointdistiller: Structured knowledge distillation towards efficient
  and compact 3d detection.
\newblock In {\em Proceedings of the IEEE/CVF Conference on Computer Vision and
  Pattern Recognition (CVPR)}, pages 21791--21801, June 2023.

\bibitem{Zhang2024HVDistill}
S.~Zhang, J.~Deng, L.~Bai, et~al.
\newblock Hvdistill: Transferring knowledge from images to point clouds via
  unsupervised hybrid-view distillation.
\newblock {\em International Journal of Computer Vision}, 132:2585--2599, 2024.

\bibitem{zhang2020gpu}
Simon Zhang, Mengbai Xiao, and Hao Wang.
\newblock Gpu-accelerated computation of vietoris-rips persistence barcodes.
\newblock In {\em 36th International Symposium on Computational Geometry (SoCG
  2020)}. Schloss Dagstuhl-Leibniz-Zentrum f{\"u}r Informatik, 2020.

\bibitem{zhang2020polarnet}
Yang Zhang, Zixiang Zhou, Philip David, Xiangyu Yue, Zerong Xi, Boqing Gong,
  and Hassan Foroosh.
\newblock Polarnet: An improved grid representation for online lidar point
  clouds semantic segmentation.
\newblock In {\em Proceedings of the IEEE/CVF conference on computer vision and
  pattern recognition}, pages 9601--9610, 2020.

\bibitem{zhao2019pointweb}
Hengshuang Zhao, Li~Jiang, Chi-Wing Fu, and Jiaya Jia.
\newblock Pointweb: Enhancing local neighborhood features for point cloud
  processing.
\newblock In {\em Proceedings of the IEEE/CVF conference on computer vision and
  pattern recognition}, pages 5565--5573, 2019.

\bibitem{point_trans_v1}
Hengshuang Zhao, Li~Jiang, Jiaya Jia, Philip H.~S. Torr, and Vladlen Koltun.
\newblock Point transformer.
\newblock In {\em Proceedings of the IEEE/CVF International Conference on
  Computer Vision (ICCV)}, pages 16259--16268, 2021.

\bibitem{svgseg}
Lin Zhao, Siyuan Xu, Liman Liu, Delie Ming, and Wenbing Tao.
\newblock Svaseg: Sparse voxel-based attention for 3d lidar point cloud
  semantic segmentation.
\newblock {\em Remote Sensing}, 14(18):4471, 2022.

\bibitem{voxelNet}
Yin Zhou and Oncel Tuzel.
\newblock Voxelnet: End-to-end learning for point cloud based 3d object
  detection.
\newblock In {\em Proceedings of the IEEE Conference on Computer Vision and
  Pattern Recognition (CVPR)}, June 2018.

\bibitem{zhu2021cylindrical}
Xinge Zhu, Hui Zhou, Tai Wang, Fangzhou Hong, Yuexin Ma, Wei Li, Hongsheng Li,
  and Dahua Lin.
\newblock Cylindrical and asymmetrical 3d convolution networks for lidar
  segmentation.
\newblock In {\em Proceedings of the IEEE/CVF conference on computer vision and
  pattern recognition}, pages 9939--9948, 2021.

\bibitem{zhuang2021perception}
Zhuangwei Zhuang, Rong Li, Kui Jia, Qicheng Wang, Yuanqing Li, and Mingkui Tan.
\newblock Perception-aware multi-sensor fusion for 3d lidar semantic
  segmentation.
\newblock In {\em Proceedings of the IEEE/CVF International Conference on
  Computer Vision (ICCV)}, pages 16280--16290, 2021.

\end{thebibliography}

\appendix
\section{Experimental Details}
\label{appendix:experimental-details}
\subsection{Dataset Details}

\textbf{SemanticKITTI}~\cite{behley2019semantickitti} provides LiDAR point clouds from urban and suburban scenes, featuring 22 sequences with dense semantic annotations across 19 classes (e.g., vehicles, pedestrians, roads). Its high resolution and detailed labels make it a rigorous testbed for semantic segmentation.

\textbf{NuScenes}~\cite{caesar2020nuscenes} integrates LiDAR, camera, and radar data in 1,000 diverse scenes, including urban roads and highways. With 3D bounding box annotations for 23 object types, it challenges models with varied weather, occlusions, and dynamic elements suited for detection and segmentation tasks. In addition, we use nuScenes-lidar seg, which is an extension of nuScenes. This dataset has semantic labels of 32 categories and annotates each point from keyframes in nuScenes. We used the 700 scenes in the training set with segmentation labels to fine-tune for the semantic segmentation task, and the 150 scenes in the validation set to verify the performance.

\textbf{Waymo Open Dataset}~\cite{sun2020waymo} delivers high-resolution LiDAR data from 1,000 segments in various locations, with frequent sweeps and 3D annotations for vehicles, pedestrians and cyclists. Its long-range scans and varied conditions test robustness and generalization.

\textbf{Training Details.} We apply the same setting to all datasets mentioned with a batch size of 12 for training, 18 for validation, and 1 for testing. The training process is trained with 50 epochs, with evaluations performed at every epoch. We use the AdamW optimizer with an initial learning rate of 0.002 and a weight decay of 0.005. The training process follows a OneCycleLR learning rate scheduling strategy, which dynamically adjusts the learning rate throughout the training cycle. Initially, the learning rate increases rapidly to a pre-defined maximum value during the warm-up phase, ensuring a stable convergence. Then it follows a cosine annealing schedule, gradually decreasing to a much lower value as the training progresses. This approach helps the model escape sharp local minima early in training, while allowing fine-tuning in later stages for better generalization. In addition, a cyclical adjustment to weight decay prevents overfitting and improves the robustness of the model. Data augmentation techniques are applied during training, such as random rotation, scaling, flipping, and jittering. The evaluation pipeline includes a semantic segmentation evaluator and a precise evaluator, ensuring a reliable model evaluation.

\subsection{Model and Training Hyperparameters}
\label{appendix:setting}
In our experiments, we use Point Transformer V3~\cite{wu2024pointtransformerv3simpler} as both the teacher and the student backbone, the student model being significantly reduced in capacity to enhance efficiency while maintaining essential representational power. Note that at the time of this project, the authors have not yet released the final weights for the model Point Transformer V3, so we have trained them from scratch. The student network is approximately 20\% the depth of the teacher. Specifically, the encoder of the student model has shallower depths, which reduce from teacher $(2, 2, 2, 6, 2)$ to $(1, 1, 1, 2, 1)$. Similarly, the channel dimensions are reduced from $(32, 64, 128, 256, 512)$ in the teacher to $(16, 16, 32, 64, 128)$ in the student. The number of attention heads in the transformer layers is systematically reduced from the teacher’s (2, 4, 8, 16, 32) to (1, 1, 2, 4, 8) across the five encoder stages, effectively halving the complexity of multi-head attention at each stage. This reduction reduces the computational load while preserving the transformer’s capacity to model spatial relationships. The decoder follows a similar strategy, with depths of (1, 1, 1, 1), channels of (64, 64, 128, 128) and attention heads scaled down from the teacher’s (4, 4, 8, 16) to (2, 2, 4, 8) in its four stages, ensuring a proportional decrease. By retaining strides of (2, 2, 2, 2) and a patch size of 1024, the student model maintains structural compatibility for effective distillation.

Training is carried out on multiple datasets, including nuScenes, SemanticKITTI, and Waymo, using cross-entropy and Lovász segmentation losses to optimize segmentation performance. We adopt a two-stage distillation strategy in which the teacher model is first trained to full performance before being used to guide the student model in a separate distillation phase. This approach ensures that the student benefits from a fully converged and stable teacher during knowledge transfer.

For topology-aware learning, we use Vietoris-Rips filtration to compute persistence diagrams, enabling robust topological feature extraction across point-cloud datasets. This choice ensures that meaningful topological structures are captured, while preventing excessive noise in the persistence diagrams. The filtration scale was empirically determined to balance computational efficiency and representational fidelity. Using this set-up, we ensure that the student model effectively learns both the geometric and topological structures necessary for accurate point-cloud segmentation.

\subsection{Data Augmentation} 
To enhance the robustness and generalization ability of the model, we applied a series of data augmentation techniques during training. Specifically, the input point clouds were randomly rotated around the z-axis within a range of $\pm1^\circ$ with a probability of 0.5, and uniformly scaled by a random factor between 0.9 and 1.1. Random flipping was performed along spatial axes with a probability of 0.5 to introduce geometric variability. Additionally, Gaussian jittering with a standard deviation of 0.005 and a clipping value of 0.02 was applied to perturb point positions slightly. Following these augmentations, a grid sampling operation with a grid size of 0.05\,m was used to downsample the point cloud, where hashing was performed using the Fowler–Noll–Vo (FNV) hash function. Finally, the processed data were converted into tensors and relevant features (coordinates and strength) along with labels (segment) were collected for training. These augmentations were designed to simulate realistic sensor noise and spatial variations, thereby improving the model’s performance on unseen data.

\section{Details of Resources Used}
\label{appendix:compute-details}
We conducted all experiments on the University HPC Cluster using NVIDIA A100. Each node is equipped with 2 NVIDIA A100 GPUs (81 GB VRAM each), and we utilized one node and one A100 GPU for training and evaluation. The cluster runs on a Linux environment with Slurm for job scheduling.

\section{Theoretical Justification of Topology-Aware Distillation}
\label{appendix:topology-proof}

\subsection{Convergence of Chamfer Distance to 2-Wasserstein Distance}

In our framework, we adopt the Chamfer distance (\(\mathcal{L}_{\text{CD}}\)) to measure the similarity between persistence diagrams due to its efficiency and differentiability. Here, we theoretically justify this choice.

\begin{theorem}
Let \(D_T\) and \(D_S\) be the persistence diagrams for the teacher and student models, respectively. If \(\mathcal{L}_{\text{CD}}(D_T, D_S) \to 0\), then the 2-Wasserstein distance between \(D_T\) and \(D_S\) also converges:
\begin{equation}
W_2(D_T, D_S) \leq \sqrt{\mathcal{L}_{\text{CD}}(D_T, D_S)}.
\end{equation}
Thus, minimizing Chamfer Distance implicitly minimizes the Wasserstein distance between persistence diagrams, ensuring topological consistency.
\end{theorem}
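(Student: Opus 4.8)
The plan is to realise $W_2(D_T,D_S)$ through an explicit partial matching between the two diagrams whose total squared cost is dominated by $\mathcal{L}_{\text{CD}}(D_T,D_S)$, and then to invoke the variational characterisation of the Wasserstein distance. Recall that for persistence diagrams
\[
W_2(D_T,D_S)^2 \;=\; \inf_{\gamma}\ \sum_{x\in D_T\cup\Delta}\|x-\gamma(x)\|^2,
\]
the infimum running over bijections $\gamma$ between $D_T$ and $D_S$, each augmented by the diagonal $\Delta$ with infinite multiplicity, whereas $\mathcal{L}_{\text{CD}}(D_T,D_S)$ aggregates the squared nearest-neighbour distances $\min_{y\in D_S\cup\Delta}\|x-y\|^2$ over $x\in D_T$ together with the symmetric term over $D_S$. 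Hence it suffices to construct a single admissible $\gamma$ with $\sum_x\|x-\gamma(x)\|^2\le\mathcal{L}_{\text{CD}}(D_T,D_S)$; taking square roots then yields the claim.

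Concretely, I would build $\gamma$ from the two nearest-neighbour assignments $\pi:D_T\to D_S$ and $\sigma:D_S\to D_T$. Wherever $\pi$ is injective it already defines a piece of the matching whose cost equals $\sum_x\|x-\pi(x)\|^2$, bounded by the $D_T$-to-$D_S$ half of the Chamfer distance; the surviving points of $D_S$ are treated symmetrically through $\sigma$. The remaining points — those created by collisions of $\pi$ (or $\sigma$) and any genuine surplus of off-diagonal mass — would be matched to $\Delta$ at cost $\operatorname{dist}(x,\Delta)^2$, which I would charge back to a Chamfer term via the triangle inequality $\operatorname{dist}(x,\Delta)\le\|x-\pi(x)\|+\operatorname{dist}(\pi(x),\Delta)$ and its $D_S$-analogue. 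Assembling the pieces produces an admissible $\gamma$ whose cost is controlled by $\mathcal{L}_{\text{CD}}(D_T,D_S)$, up to a multiplicative constant coming from squaring the triangle inequality, which is harmless for a convergence statement.

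The crux — and what I expect to be the genuine obstacle — is that $\pi$ and $\sigma$ are not bijections in general: two off-diagonal points of $D_T$ may share a nearest neighbour in $D_S$, or $D_S$ may carry fewer off-diagonal points than $D_T$, and then rerouting the surplus through the diagonal can cost far more than any Chamfer contribution, so the clean bound fails in full generality. This is precisely why the hypothesis must be the asymptotic one $\mathcal{L}_{\text{CD}}(D_T,D_S)\to 0$: once this quantity drops below the square of the minimal pairwise gap among the finitely many, multiplicity-free off-diagonal points of $D_T$, every such point acquires a distinct nearest neighbour in $D_S$ and $D_S$ can carry no extra off-diagonal mass, so $\pi$ restricts to a genuine bijection on the off-diagonal part and $W_2(D_T,D_S)^2\le\sum_x\|x-\pi(x)\|^2\le\mathcal{L}_{\text{CD}}(D_T,D_S)$ directly. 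A minor point I would also record is that the $\ell^2$ norm on $\mathbb{R}^2$ used in $\mathcal{L}_{\text{CD}}$ and the $\ell^\infty$ norm customary for $W_2$ of diagrams differ only by the factor $\sqrt{2}$, so the choice of convention affects constants but not the stated inequality.
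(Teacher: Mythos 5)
Your proposal correctly identifies the real obstruction, and it is worth noting that this is precisely the step the paper's own proof glosses over: the paper simply asserts that because \(\mathcal{L}_{\text{CD}}\) ``considers all bidirectional nearest neighbors'' it upper-bounds \(W_2^2\), and then takes square roots. In fact, dropping the bijection constraint can only decrease each matched cost, so the nearest-neighbour sums are naturally a \emph{lower} bound (up to a factor of two) on the optimal matching cost rather than an upper bound; your instinct that the ``clean bound fails in full generality'' and that surplus mass rerouted through the diagonal is invisible to the Chamfer distance is exactly right, and your plan of exhibiting an explicit admissible matching is the honest way to attempt the inequality.

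However, your asymptotic rescue has a genuine gap. Smallness of \(\mathcal{L}_{\text{CD}}\) does make the nearest-neighbour map \(\pi\) injective on the off-diagonal points of \(D_T\) (if two points at distance \(\delta\) shared a nearest neighbour, the triangle inequality would force \(\delta\le 2\sqrt{\mathcal{L}_{\text{CD}}}\)), but it does \emph{not} imply that ``\(D_S\) can carry no extra off-diagonal mass.'' Take \(D_T=\{p\}\) with \(p\) far from the diagonal \(\Delta\), and \(D_S=\{q_1,q_2\}\) with both \(q_i\) within \(\epsilon\) of \(p\). Then \(\mathcal{L}_{\text{CD}}(D_T,D_S)\le 3\epsilon^2\to 0\), yet every admissible matching must send one \(q_i\) to \(\Delta\), so \(W_2(D_T,D_S)\) stays bounded below by roughly \(\operatorname{dist}(p,\Delta)\) and the claimed inequality \(W_2\le\sqrt{\mathcal{L}_{\text{CD}}}\) fails even in the limit. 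Surplus student points contribute almost nothing to the Chamfer distance as long as they cluster near teacher points, so the hypothesis \(\mathcal{L}_{\text{CD}}\to 0\) alone cannot close your argument; you would need an additional assumption controlling the off-diagonal cardinality of \(D_S\) (for example, that \(D_S\) has at most as many off-diagonal points as \(D_T\), or a separation condition on \(D_S\) itself), under which your bijection construction does yield \(W_2^2\le\sum_{p\in D_T}\|p-\pi(p)\|^2\le\mathcal{L}_{\text{CD}}\). As written, both your salvage step and the paper's one-line argument are missing exactly this hypothesis.
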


\begin{proof}
By the properties of optimal transport~\cite{villani2009optimal}, the 2-Wasserstein distance between persistence diagrams satisfies:
\begin{equation}
W_2^2(D_T, D_S) = \inf_{\gamma \in \Gamma(D_T, D_S)} \sum_{(p, q) \in \gamma} \| p - q \|^2,
\end{equation}
where \(\Gamma(D_T, D_S)\) denotes the set of all valid matchings. The Chamfer Distance relaxes this formulation by independently matching each point to its nearest neighbor:
\begin{equation}
\mathcal{L}_{\text{CD}}(D_T, D_S) = \sum_{p \in D_T} \min_{q \in D_S} \| p - q \|^2 + \sum_{q \in D_S} \min_{p \in D_T} \| q - p \|^2.
\end{equation}
Since \(\mathcal{L}_{\text{CD}}\) considers all bidirectional nearest neighbors, it provides an upper bound on \(W_2^2(D_T, D_S)\). Taking the square root completes the proof.
\end{proof}

\subsection{Practical Motivation for Using Chamfer Distance}

Although the 2-Wasserstein distance (\(W_2\)) is the standard metric to compare persistence diagrams in topological data analysis, we opt for the Chamfer Distance (\(\mathcal{L}_{\text{CD}}\)) due to the following reasons:

\begin{itemize}
    \item \textbf{Computational Efficiency}: Computing \(W_2\) requires solving an optimal transport problem with complexity \(O(n^3 \log n)\), which is prohibitively expensive for large persistence diagrams derived from dense 3D point clouds. In contrast, the Chamfer distance can be computed in \(O(nm)\) time via nearest-neighbor search.
    \item \textbf{Differentiability}: The Chamfer distance is readily differentiable, enabling direct integration with gradient-based optimization. Wasserstein distance typically requires approximations (e.g., Sinkhorn regularization), introducing additional hyperparameters and potential training instability.
    \item \textbf{Empirical Stability}: In our experiments, Chamfer Distance yields stable convergence during training and maintains consistent topological structures without the need for complex approximations.
\end{itemize}

\subsection{Controlling Topology Loss Influence}

To prevent the topology loss \(\mathcal{L}_{\text{topo}}\) from dominating the overall training dynamics, we impose a gradient norm constraint:
\begin{equation}
\| \nabla_x \mathcal{L}_{\text{topo}} \| \leq \alpha \| \nabla_x \mathcal{L}_{\text{feat}} \|,
\end{equation}
where \(\alpha > 0\) is a small hyperparameter. This ensures that topology-aware regularization complements rather than overwhelms feature-based alignment.

\subsection{Visualization of Topology-Aware Analysis}

Figure~\ref{fig:topology_analysis} illustrates how persistent homology captures the evolution of topological features across different filtration scales.
Given a point cloud, we construct a simplicial complex and track the birth and death of topological structures as the filtration parameter \(\epsilon\) increases.
The persistence diagram \(D = \{(b_i, d_i)\}_{i=1}^M\) quantifies these events, where each point represents a topological feature.
Longer bars correspond to persistent structures that encode essential geometric patterns, while shorter bars typically represent noise or minor perturbations.

Visualizing the persistence diagrams allows us to better understand the types of geometric features captured by the teacher model, such as connected components (\(H_0\)), loops (\(H_1\)) and voids (\(H_2\)).
By encouraging the student to mimic these persistent topological features through topology-aware distillation, we aim to transfer not only semantic knowledge but also the critical underlying geometric structures necessary for robust point-cloud understanding.
This visualization supports the intuition behind our method, showing that topological summaries can effectively reflect meaningful geometric information beyond what is captured by the Euclidean feature alignment.

\begin{figure}[h]
    \centering
    \includegraphics[width=\linewidth]{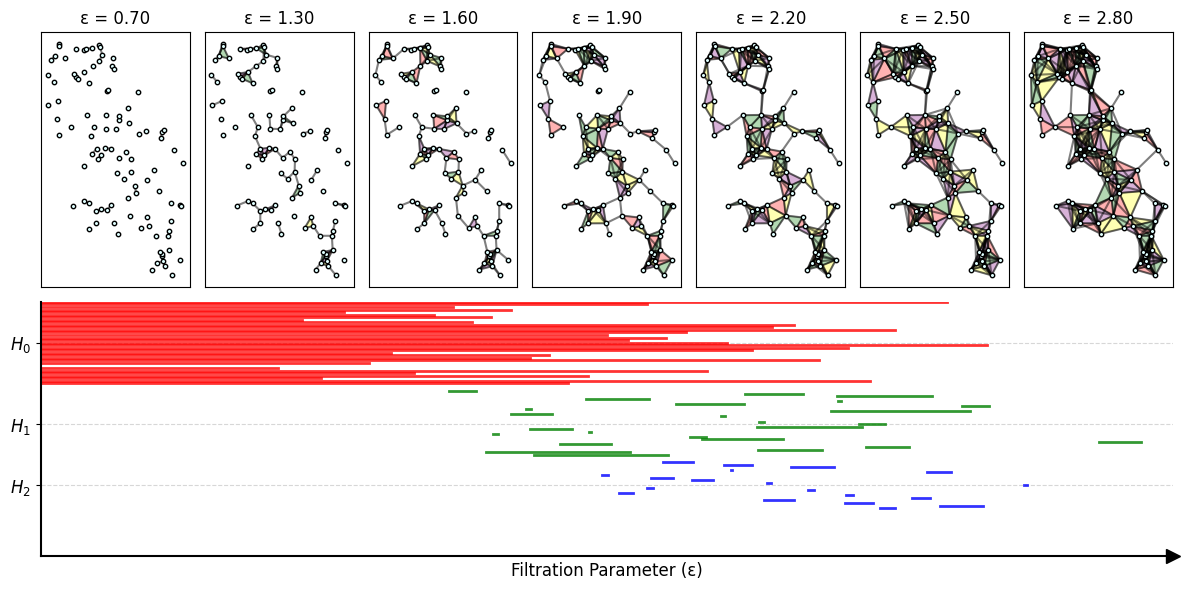}
    \caption{Illustration of topology-aware analysis through Vietoris–Rips filtration. 
    The top row depicts the evolution of the simplicial complex as the filtration parameter \(\epsilon\) increases. 
    The bottom part shows the corresponding barcode representation of persistent homology groups in different dimensions (\(H_0, H_1, H_2\)).}
    \label{fig:topology_analysis}
\end{figure}

\section{Broader Impacts}
\label{appendix:broader-impacts}
Our work enables efficient point cloud processing on resource-constrained edge devices, achieving a 1.64$\times$ faster inference (Table~\ref{tab:params_fps}) and 4.5$\times$ lower memory usage (Table~\ref{tab:memory_comparison}). This facilitates real-time deployment in self-driving cars, potentially enhancing road safety through better obstacle detection and reducing costs to make autonomous vehicles more accessible. However, the student model’s reduced accuracy (78.17\% mIoU vs. 80.03\% compared to the teacher's mIoU) may lead to errors in object detection, risking accidents if not carefully validated. Widespread adoption of autonomous vehicles might displace jobs in transportation, impacting drivers.

\section{Ablation Study}




Fig~\ref{fig:visualization} shows a top-down view of the semantic segmentation results in the NuScene validation dataset. The student model demonstrates strong alignment with the ground truth across nearly all object classes, effectively capturing the spatial layout and fine-grained structures in the scene. Compared to the teacher, the student produces cleaner boundaries and more consistent predictions, particularly in regions with small or scattered objects. This highlights the effectiveness of our knowledge distillation approach in transferring knowledge while enhancing prediction quality.

\begin{figure}[ht]
    \centering
    \begin{subfigure}[b]{0.3\textwidth}
        \includegraphics[width=\textwidth]{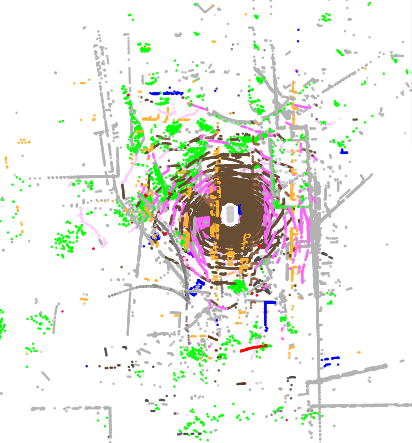}
        \caption{Ground Truth}
        \label{fig:sub1}
    \end{subfigure}
    \hfill
    \begin{subfigure}[b]{0.3\textwidth}
        \includegraphics[width=\textwidth]{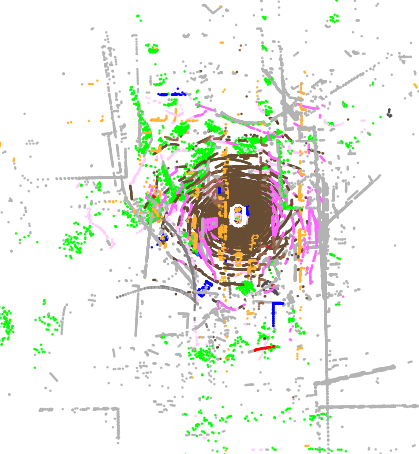}
        \caption{Teacher Prediction}
        \label{fig:sub2}
    \end{subfigure}
    \hfill
    \begin{subfigure}[b]{0.3\textwidth}
        \includegraphics[width=\textwidth]{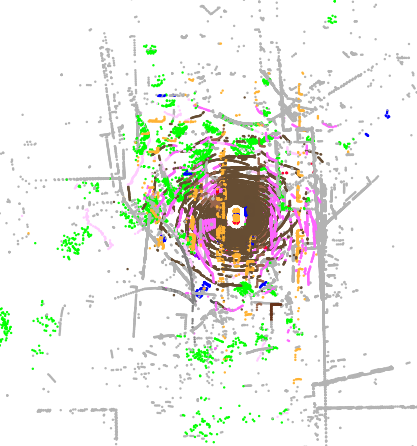}
        \caption{Student Prediction}
        \label{fig:sub3}
    \end{subfigure}
    \caption{Visualization of our method on the nuScenes validation set. (a) Ground truth, (b) teacher model prediction, and (c) student model prediction. The student model closely follows the teacher’s output and ground truth, successfully capturing almost all object classes, demonstrating the effectiveness of the knowledge distillation process.}
    \label{fig:visualization}
\end{figure}

The Table~\ref{tab:loss-component-influence} clearly demonstrates the effectiveness of each proposed loss component in improving student model performance. Adding the loss with topology awareness $\mathcal{L}_{\text{topo}}$ to the baseline leads to significant improvements, particularly on Waymo (+3.5 mIoU) and SemanticKITTI (+1.7 mIoU), where the capture of the global geometric context is essential. Meanwhile, gradient-guided feature alignment loss $\mathcal{L}_{\text{grad}}$ yields consistent gains across datasets, with the most noticeable impact on nuScenes (+0.9 mIoU), highlighting its strength to refine local features and object boundaries. When combined in the full loss formulation, the student achieves the highest mIoU on all benchmarks, confirming the complementary nature of the preservation of global topology and the alignment of local characteristics. These results validate the ability of the proposed framework to distill both structural and task-relevant knowledge, allowing the lightweight student model to approach or even surpass the state-of-the-art performance while maintaining high efficiency.

\begin{table}[h]
\centering
\begin{tabular}{ccccccc}
\toprule
$\mathcal{L}_{\text{KLD}}$ & $\mathcal{L}_{\text{seg}}$ & $\mathcal{L}_{\text{topo}}$ & $\mathcal{L}_{\text{grad}}$ & SemanticKITTI & Waymo & nuScenes \\
\midrule
$\checkmark$ & $\checkmark$ &  &  & 71.5 & 64.8 & 74.0 \\
$\checkmark$ & $\checkmark$ & $\checkmark$ &  & 73.2 & 68.3 & 77.3 \\
$\checkmark$ & $\checkmark$ &  & $\checkmark$ & 72.3 & 66.5 & 74.9 \\
$\checkmark$ & $\checkmark$ & $\checkmark$ & $\checkmark$ & 74.6 & 69.5 & 78.1 \\
\bottomrule
\end{tabular}
\caption{Influence of each component on the final performance.}
\label{tab:loss-component-influence}
\end{table}

\end{document}